\newtheorem{lemma}{Lemma}
\crefname{equation}{}{} 
\crefname{section}{Sec.}{Sec.}
\crefname{algorithm}{Alg.}{Alg.}
\crefname{ALC@unique}{Line}{Lines}
\newcommand{\argmax}{\operatornamewithlimits{argmax}}
\renewcommand{\mid}{\,|\,}
\newcommand{\goose}{\textsc{GoOSE}\xspace}
\newcommand{\s}{\textbf{x}}
\newcommand{\z}{\textbf{z}}
\newcommand{\Rbret}{\tilde{R}^{\mathrm{ret}}}
\newcommand{\Rret}{R^{\mathrm{ret}}}
\newcommand{\Rsafe}{R^{\mathrm{safe}}}
\newcommand{\Rreach}{R^{\mathrm{reach}}}
\newcommand{\Rbreach}{\tilde{R}^{\mathrm{reach}}}
\newcommand{\Rbar}{\tilde{R}}
\newcommand{\Rerg}{R^{\textrm{ergodic}}}
\newcommand{\start}{\s^\dagger}
\newcommand{\goal}{\s^\star}
\newcommand{\Spess}{S^p}
\newcommand{\Shato}{\bar{S}^{o,\epsilon}}
\newcommand{\Shatp}{\bar{S}^p}
\newcommand{\Sadj}{A}
\newcommand{\dom}{\mathcal{D}}
\newcommand{\priority}{\alpha}
\newcommand{\todo}[1]{}
\renewcommand{\paragraph}[1]{\textbf{#1}\hspace{1em}}
\newacronym{IML}{IML}{Interactive Machine Learning}  
\author{%
  Matteo Turchetta \\
  Dept.~of Computer Science\\
  ETH Zurich \\
  \texttt{matteotu@inf.ethz.ch}
   \and
   Felix Berkenkamp \\
   Dept.~of Computer Science \\
   ETH Zurich\\
   \texttt{befelix@inf.ethz.ch}
   \and
   Andreas Krause \\
   Dept.~of Computer Science \\
   ETH Zurich \\
   \texttt{krausea@ethz.ch}
}
\title{Safe Exploration for Interactive Machine Learning}
\begin{document}

\maketitle


\begin{abstract}
In \gls{IML}, we iteratively make decisions and obtain noisy observations of an unknown function. While \gls{IML} methods, e.g., Bayesian optimization and active learning, have been successful in applications,
on real-world systems they must provably avoid unsafe decisions. To this end, safe \gls{IML} algorithms must carefully learn about \textit{a priori} unknown constraints without making unsafe decisions. 
Existing algorithms for this problem learn about the safety of all decisions to ensure convergence. 
This is sample-inefficient, as it explores decisions that are not relevant for the original \gls{IML} objective.  
In this paper, we introduce a novel framework that renders any existing unsafe \gls{IML} algorithm safe. Our method works as an add-on that takes suggested decisions as input and exploits regularity assumptions in terms of a Gaussian process prior in order to efficiently learn about their safety. As a result, we only explore the safe set when necessary for the \gls{IML} problem. We apply our framework to safe Bayesian optimization and to safe exploration in deterministic Markov Decision Processes (MDP), which have been analyzed separately before. Our method outperforms other algorithms empirically.
\end{abstract}

\glsresetall

\section{Introduction} 
\label{sec:introduction}

\gls{IML} problems, where an autonomous agent actively queries an unknown function to optimize it, learn it, or otherwise act based on the observations made, are pervasive in science and engineering. For example, Bayesian optimization (BO) \citep{mockus1978application} is an established paradigm to optimize unknown functions and has been applied to diverse tasks such as optimizing robotic controllers~\citep{marco2017virtual} and hyperparameter tuning in machine learning~\citep{snoek2012practical}. Similarly, Markov Decision Processes (MDPs) \citep{puterman2014markov} model sequential decision making problems with long term consequences and are applied to a wide range of problems including finance and management of water resources \citep{white1993survey}. 

However, real-world applications are subject to safety constraints, which cannot be violated during the learning process. Since the dependence of the safety constraints on the decisions is unknown \textit{a priori}, existing algorithms are not applicable. To \textit{optimize} the objective without violating the safety constraints, we must \textit{carefully explore} the space and ensure that decisions are safe before evaluating them. In this paper, we propose a data-efficient algorithm for safety-constrained \gls{IML} problems.

\paragraph{Related work}
One class of \gls{IML} algorithms that consider safety are those for BO with Gaussian Process (GP) \citep{rasmussen2004gaussian} models of the objective. While classical BO algorithms focus on efficient optimization~\citep{srinivas2009gaussian,thompson1933likelihood,wang2017max}, these methods have been extended to incorporate safety constraints. For example, \citet{gelbart2014bayesian} present a variant of expected improvement with unknown constraints, while \citet{hernandez2016general} extend an information-theoretic BO criterion to handle black-box constraints. However, these methods only consider finding a safe solution, but allow unsafe evaluations during the optimization process. \citet{Wu2016ConservativeB} define safety as a constraint on the cumulative reward, while  \citet{schreiter2015safe} consider the safe exploration task on its own. The algorithms \textsc{SafeOPT} \citep{sui2015safe,berkenkamp2016bayesian} 
and \textsc{StageOPT} \citep{sui2018stagewise} both guarantee safety of the exploration and near-optimality of the solution. 
However, they treat the exploration of the safe set as a proxy objective, which leads to sample-inefficient exploration as they explore the entire safe set, even if this is not necessary for the optimization task, see the evaluation counts (green) in \cref{fig:illustrative_stageopt} for an example.

\begin{figure*}
    \begin{subfigure}[b]{0.35\textwidth}
        \includegraphics{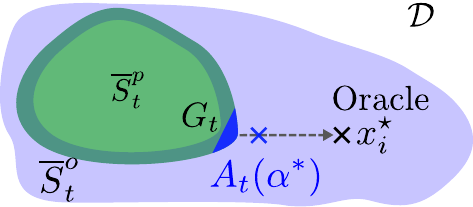}
        \vspace{0.7em}
        \caption{Set illustration.}
        \label{fig:set_illustration}
    \end{subfigure}%
    \begin{subfigure}[b]{0.32\textwidth}
        \includegraphics{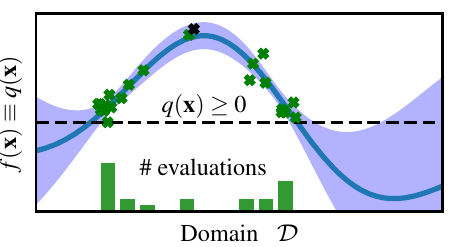} 
        \caption{\textsc{StageOPT}.}
        \label{fig:illustrative_stageopt}
    \end{subfigure}%
    \begin{subfigure}[b]{0.32\textwidth}
        \includegraphics{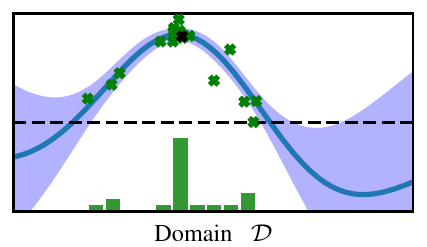}
        \caption{\textsc{GP-UCB} + \goose (ours)}
        \label{fig:illustrative_goose}
    \end{subfigure}
    \caption{Existing algorithms for safe \gls{IML} aim to expand the safe set $\Shatp$ (green shaded) in \cref{fig:set_illustration} by evaluating decisions on the boundary of the pessimistic safe set (dark green shaded). This can be inefficient: to solve the safe BO problem in \cref{fig:illustrative_stageopt}, \textsc{StageOPT} evaluates decisions (green crosses, histogram) close to the safety constraint $q(\cdot) > 0$ (black dashed), even though the maximum (black cross) is known to be safe. In contrast, our method uses decisions $\goal_i$ from existing \textit{unsafe} \gls{IML} algorithms (oracle) within the optimistic safe set $\Shato_t$ (blue shaded, \cref{fig:set_illustration}). It can then use any heuristic to select learning targets $A_t$ (blue cross) that are informative about the safety of $\goal_i$ and learns about them efficiently within $G_t \subseteq \Shatp_t$ (blue shaded region). Since this method only learns about the safe set when necessary, we evaluate more close-to-optimal decisions in \cref{fig:illustrative_goose}.
    }
    \label{fig:IllustrativeExample}
\end{figure*}

Safety has also been investigated in \gls{IML} problems in directed  graphs, where decisions have long-term effects in terms of safety. \citet{moldovan2012safe} address this problem in the context of discrete MDPs by optimizing over ergodic policies, i.e., policies that are able to return to a known set of safe states with high probability. However, they do not provide exploration guarantees. \citet{biyik2019efficient} study the ergodic exploration problem in discrete and deterministic MDPs with unknown dynamics and noiseless observations.  \citet{turchetta2016safe} investigate the ergodic exploration problem subject to unknown external safety constraints under the assumption of known dynamics by imposing additional ergodicity constraints on the  \textsc{SafeOPT} algorithm. \citet{wachi2018safe} compute approximately optimal policies in the same context but do not actively learn about the constraint. In continuous domains, safety has been investigated by, for example, \citet{akametalu2014reachability,koller18safempc}. While these methods provide safety guarantees, current exploration guarantees rely on uncertainty sampling on a discretized domain~\citep{berkenkamp2017safe}. Thus, their analysis can benefit from the more efficient, goal-oriented exploration introduced in this paper.

\paragraph{Contribution} 
In this paper, we introduce the Goal Oriented Safe Exploration algorithm, \goose; a novel framework that works as an add-on to existing \gls{IML} algorithms and renders them safe. Given a possibly unsafe suggestion by an \gls{IML} algorithm, it safely and efficiently learns about the safety of this decision by exploiting continuity properties of the constraints in terms of a GP prior. Thus, unlike previous work, \goose only learns about the safety of decisions relevant for the \gls{IML} problem. We analyze our algorithm and prove that, with high probability, it only takes safe actions while learning about the safety of the suggested decisions. On safe BO problems, our algorithm leads to a bound on a natural notion of safe cumulative regret when combined with a no-regret BO algorithm. Similarly, we use our algorithm for the safe exploration in deterministic MDPs. Our experiments show that \goose is significantly more data-efficient than existing methods in both settings.


\section{Problem Statement and Background}
\label{sec:ProblemStatement}

In \gls{IML}, an agent iteratively makes decisions and observes their consequences, which it can use to make better decisions over time. Formally, at iteration $i$, the agent $\mathcal{O}_i$ uses the previous $i-1$ observations to make a new decision $\goal_i = \mathcal{O}_i(\mathcal{D}_i)$ from a finite decision space $\mathcal{D}_i \subseteq \mathcal{D} \subseteq \mathbb{R}^d$. It then observes a noisy measurement of the unknown objective function $f:\mathcal{D} \rightarrow \mathbb{R}$ and uses the new information in the next iteration. This is illustrated in the top-left corner (blue shaded) of \cref{fig:alg_explanation}.
Depending on the goal of the agent, this formulation captures a broad class of problems and many solutions to these problems have been proposed. For example, in \emph{Bayesian optimization} the agent aims to find the global optimum $\max_\s f(\s)$ \citep{mockus1978application}. Similarly, in active learning \citep{schreiter2015safe}, one aims to learn about the function $f$. In the general case, the decision process may be stateful, e.g., as in dynamical systems, so that the decisions $\mathcal{D}_i$ available to the agent depend on those made in the past. This dependency among decisions can be modeled with a \textit{directed graph}, where nodes represent decisions and an edge connects node $\s$ to node $\s'$ if the agent is allowed to evaluate $\s'$ given that it evaluated $\s$ at the previous decision step. In the BO setting, the graph is fully-connected and any decision may be evaluated, while in a deterministic MDP decisions are states and edges represent transitions \citep{turchetta2016safe}.

In this paper, we consider \gls{IML} problems with safety constraints, which frequently occur in real-world settings. The safety constraint can be written as $q(\s) \geq 0$ for some function $q$. Any decision $\goal_i$ for $i \geq 1$ evaluated by the agent must be safe. For example, \citet{berkenkamp2016bayesian} optimize the control policy of a flying robot and must evaluate only policies that induce trajectories satisfying given constraints. However, it is unknown \textit{a priori} which policy parameters induce safe trajectories. Thus, we do not know which decisions are safe in advance, that is, $q:\mathcal{D} \to \mathbb{R}$ is \textit{a priori} unknown. However, we can learn about the safety constraint by selecting decisions $\s_t$ and obtaining noisy observations of $q(\s_t)$. We denote queries to $f$ with $\goal_i$ and queries to $q$ with $\s_t$. As a result, we face a two-tiered \textit{safe exploration} problem: On one hand we have to safely learn about the constraint $q$ to determine which decisions are safe, while on the other hand we want to learn about $f$ to solve the \gls{IML} problem. The goal is to minimize the number of queries $\s_t$ required to solve the \gls{IML} problem.

\begin{figure*}[t]
    \includegraphics[width=0.65\textwidth]{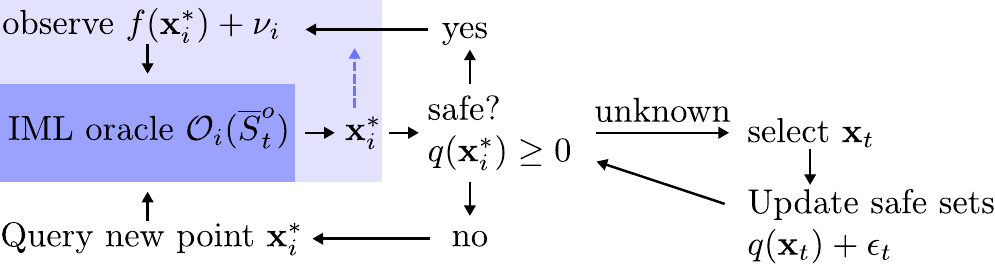}
    \caption{Overview of \goose. If the oracle's suggestion $\goal_i$ is safe, it can be evaluated. This is equivalent to the standard \textit{unsafe} \gls{IML} pipeline (top-left, blue shaded) in \cref{fig:alg_explanation}. Otherwise, \goose learns about the safety of $\goal_i$ by actively querying observations at decisions $\s_t$. Any provably unsafe decision is removed from the decision space and we query a new $\goal_i$ without providing a new observation of $f(\goal_i)$.    
    }
    \label{fig:alg_explanation}
\end{figure*}

\paragraph{Regularity}
Without further assumptions, it is impossible to evaluate decisions without violating the safety constraint $q$ \citep{sui2015safe}. For example, without an initial set of decisions that is known to be safe \textit{a priori}, we may fail at the first step. Moreover, if the constraint does not exhibit any regularity, we cannot infer the safety of decisions without evaluating them first.
We assume that a small initial safe set of decisions, $S_0$, is available, which may come from domain knowledge. 
Additionally, we assume that $\mathcal{D}$ is endowed with a positive definite kernel function, $k(\cdot, \cdot)$, and that the safety constraint $q$ has bounded norm in the induced \textit{Reproducing Kernel Hilbert Space} (RKHS)  \citep{scholkopf2002learning}), $\|q\|_k \leq B_q$. The RKHS norm measures the smoothness of the safety feature with respect to the kernel, so that $q$ is $L$-Lipschitz continuous with respect to the kernel metric $d(\s, \s') = \sqrt{k(\s, \s) - 2k(\s, \s') + k(\s', \s')}$ with $L=B_q$ \citep[(4.21)]{steinwart2008support}

This assumption allows us to model the safety constraint function $q$ with a GP \citep{rasmussen2004gaussian}. A GP is a distribution over functions parameterized by a mean function $\mu(\cdot)$ and a covariance function $k(\cdot, \cdot)$. We set $\mu(\s)=0$ for all $\s \in \mathcal{D}$ without loss of generality. The covariance function encodes our assumptions about the safety constraint. Given $t$ observations of the constraint $\textbf{y}=(q(\s_1) + \eta_1, \dots, q(\s_t) + \eta_t )$ at decisions $\mathcal{D}_t=\{\s_n\}_{n=1}^t$, where $\eta_n \sim \mathcal{N}(0, \sigma^2)$ is a zero-mean i.i.d. Gaussian noise, the posterior belief is distributed as a GP with mean, covariance, and variance
\begin{equation*}
\mu_t(\s) = \mathbf{k}_t^\mathrm{T}(\s) (\mathbf{K}_t + \sigma^2 \mathbf{I})^{-1} \mathbf{y}_t, ~
    k_t(\s,\s') = k(\s,\s') - \mathbf{k}_t^\mathrm{T}(\s) (\mathbf{K}_t + \sigma^2 \mathbf{I})^{-1} \mathbf{k}_t(\s'), ~
    \sigma_t(\s) = k_t(\s, \s)
\end{equation*}
respectively. Here, ${\mathbf{k}_t(\s)=(k(\s_1,\s), \dots, k(\s_t,\s))}$, $\mathbf{K}_t$ is the positive definite kernel matrix $[k(\s,\s')]_{\s,\s' \in D_t}$, and $\mathbf{I} \in \mathbb{R}^{t \times t}$ denotes the identity matrix.

\paragraph{Safe decisions}
The previous regularity assumptions can be used to determine which decisions are safe to evaluate. Our classification of the decision space is related to the one by \citet{turchetta2016safe}, which combines non-increasing and reliable confidence intervals on~$q$ with a reachability analysis of the underlying graph structure for decisions. 
Based on a result by \citet{chowdhury2017kernelized}, they use the posterior GP distribution to construct confidence bounds $l_t(\s) := \max(l_{t-1}(\s), \mu_{t-1}(\s) - \beta_t \sigma_{t-1}(\s))$ and $u_t(\s) := \min(u_{t-1}(\s), \mu_{t-1}(\s) + \beta_t \sigma_{t-1}(\s))$ on the function $q$. 
In particular, we have $l_t(\s) \leq q(\s) \leq u_t(\s)$ with high probability when the scaling factor $\beta_t$ is chosen as in \cref{thm:safety_and_complteness}. Thus, any decision $\s$ with $l_t(\s) \geq 0$ is satisfies the safety constraint $q(\s) \geq 0$ with high probability.


To analyze the exploration behavior of their algorithm, \citet{turchetta2016safe} use the confidence intervals within the current safe set, starting from~$S_0$, and the  Lipschitz continuity of $q$ to define $\Spess_t$, the set of decisions that satisfy the  constraint with high probability. We use a similar, albeit more efficient, definition in \cref{sec:Algorithm}. In practice, one may use the confidence intervals directly. Moreover, in order to avoid exploring decisions that are instantaneously safe but that would force the agent to eventually evaluate unsafe ones due to the graph structure $\mathcal{G}$, \citet{turchetta2016safe} define $\Shatp_t$, the subset of safe and ergodic decisions, i.e., decisions that are safe to evaluate in the short and long term.


\paragraph{Previous Exploration Schemes}
Given that only decisions in $\Shatp_t$ are safe to evaluate, any \textit{safe} \gls{IML} algorithm faces an extended exploration-exploitation problem: it can either optimize decisions within $\Shatp_t$, or expand the set of safe decisions in $\Shatp_t$ by evaluating decisions on its boundary.
Existing solutions to the safe exploration problem in both discrete and continuous domains either do not provide theoretical exploration guarantees \citep{wachi2018safe} or treat the exploration of the safe set as a proxy objective for optimality. That is, the methods uniformly reduce uncertainty on the boundary of the safe set in \cref{fig:set_illustration} until the entire safe set is learned. Since learning about the entire safe set is often unnecessary for the \gls{IML} algorithm, this procedure can be sample-inefficient. For example, in the safe BO problem in \cref{fig:illustrative_stageopt} with $f=q$, this exploration scheme leads to a large number of unnecessary evaluations on the boundary of the safe set.


\section{Goal-oriented Safe Exploration (\goose)} 
\label{sec:Algorithm}

In this section, we present our algorithm, \goose. We do not propose a new \textit{safe} algorithm for a specific \gls{IML} setting, but instead exploit that, for specific \textit{IML} problems high-performance, \textit{unsafe} algorithms already exist. We treat any such \textit{unsafe} algorithm as an \gls{IML} oracle $\mathcal{O}_i(S)$, which, given a domain $S$ and $i-1$ observations of $f$, suggests a new decision $\goal_i \in S$, see \cref{fig:alg_explanation} (blue shaded).

\goose can extend any such \textit{unsafe} \gls{IML} algorithm to the safety-constrained setting. Thus, we effectively leave the problem of querying $f$ to the oracle and only consider safety. Given an \textit{unsafe} oracle decision $\goal_i$, \goose only evaluates $f(\goal_i)$ if the decisions $\goal_i$ is known to be safe. Otherwise, it \textit{safely learns} about $q(\goal_i)$ by safely and efficiently collecting observations $q(\s_t)$. Eventually it either learns that the decision $\goal_i$ is safe and allows the oracle to evaluate $f(\goal_i)$, or that $\goal_i$ cannot be guaranteed to be safe given an $\epsilon$-accurate knowledge of the constraint, in which case the decision set of the oracle is restricted and a new decision is queried, see \cref{fig:alg_explanation}.
\todo{Maybe add here the part about safe, unsafe, undecided}

Previous approaches treat the expansion of the safe set as a proxy-objective to provide completeness guarantees. Instead, \goose employs goal-directed exploration scheme with a novel theoretical analysis that shifts the focus from greedily reducing the uncertainty \textit{inside} the safe set to learning about the safety of decisions \textit{outside} of it. This scheme retains the worst-case guarantees of existing methods, but is significantly more sample-efficient in practice. Moreover, \goose encompasses existing methods for this problem. 
We now describe the detailed steps of \goose in \cref{alg:goose,alg:se}.

\paragraph{Pessimistic and optimistic expansion.} 
To effectively shift the focus from inside the safe set to outside of it, \goose must reason not only about the decisions that are currently known to be safe but also about those that could eventually be classified as safe in the future.
In particular, it maintains two sets, which are an inner/outer approximation of the set of safe decisions that are reachable from $S_0$ and are based on a pessimistic/optimistic estimate of the constraint given the data, respectively. 

The pessimistic safe set contains the decisions that are safe with high probability and is necessary to guarantee safe exploration \citep{turchetta2016safe}. It is defined in two steps: discarding the decisions that are not instantaneously safe and discarding those that we cannot reach/return from safely (see \cref{fig:long_term_safety}) and, thus, are not safe in the long term. To characterize it starting from a given set of safe decisions $S$, we define the pessimistic constraint satisfaction operator,
\begin{equation}
    p_t(S) = \{\s \in \dom, \,\vert\, \exists \z \in S:l_t(\z)-L d(\s,\z) \geq 0\}, 
    \label{eq:pessimistic_expansion}
\end{equation}
which uses the lower bound on the safety constraint of the decisions in $S$ and the Lipschitz continuity of $q$ to determine the decisions that instantaneously satisfy the constraint with high probability, see \cref{fig:expansion_example}.
However, for a general graph $\mathcal{G}$, decisions in $p_t(S)$ may be unsafe in the long-term as in \cref{fig:long_term_safety}: No safe path to the decision $\s_5$ exists, so that it can not be safely \textit{reached}. Similarly, if we were to evaluate $\s_4$, the graph structure forces us to eventually evaluate $\s_3$, which is not contained in $p_t(S)$ and might be unsafe. That is, we cannot safely \textit{return} from $\s_4$. To exclude these decisions, we use the ergodicity operator introduced by \citet{turchetta2016safe}, which allows us to find those decisions that are pessimistically safe in the short and long term $P_t^{1}(S) =p_t(S) \cap \Rerg(p_t(S), S)$ (see \cref{app:definitions} or \citep{turchetta2016safe} for the definition of $R^{\mathrm{ergodic}}$).
%
Alternating these operations $n$ times, we obtain the $n$-step pessimistic expansion operator, $P_t^{n}(S) = p_t(P_t^{n-1}(S)) \cap \Rerg(p_t(P_t^{n-1}(S)), S)$, which, after a finite number of steps, converges to its limiting set $\tilde{P}_t(S)=\lim_{n\to \infty }P_t^{n}(S)$. 

The optimistic safe set excludes the decisions that are unsafe with high probability and makes the exploration efficient by restricting the decision space of the oracle. Similarly to the pessimistic one, it is defined in two steps. However, it uses the following optimistic constraint satisfaction operator,
\begin{equation}
    o_t^\epsilon(S) = \{\s \in \dom, \,\vert\, \exists \z \in S:u_t(\z)-L d(\s,\z) - \epsilon\geq 0\}.
\end{equation}
 See \cref{fig:expansion_example} for a graphical intuition. The additional $\epsilon$-uncertainty term in the optimistic operator accounts for the fact that we only have access to noisy measurements of the constraint and, therefore, we can only learn it up to a specified statistical accuracy. The definitions of the optimistic expansion operators  $O_t^{\epsilon,n}(S)$ and $\tilde{O}_t^\epsilon(S)$ are analogous to the pessimistic case by substituting $p_t$ with $ o_t^\epsilon$.
The sets $\tilde{P}_t$ and $\tilde{O}_t^\epsilon$ indicate the largest set of decisions that can be classified as safe in the short and long term assuming  the constraint attains the worst/best possible value within $S$, given the observations available and, for the optimistic case, despite an $\epsilon$ uncertainty.
\begin{figure*}[t]
    \begin{subfigure}[l]{0.5\textwidth}
        \includegraphics[scale=0.95]{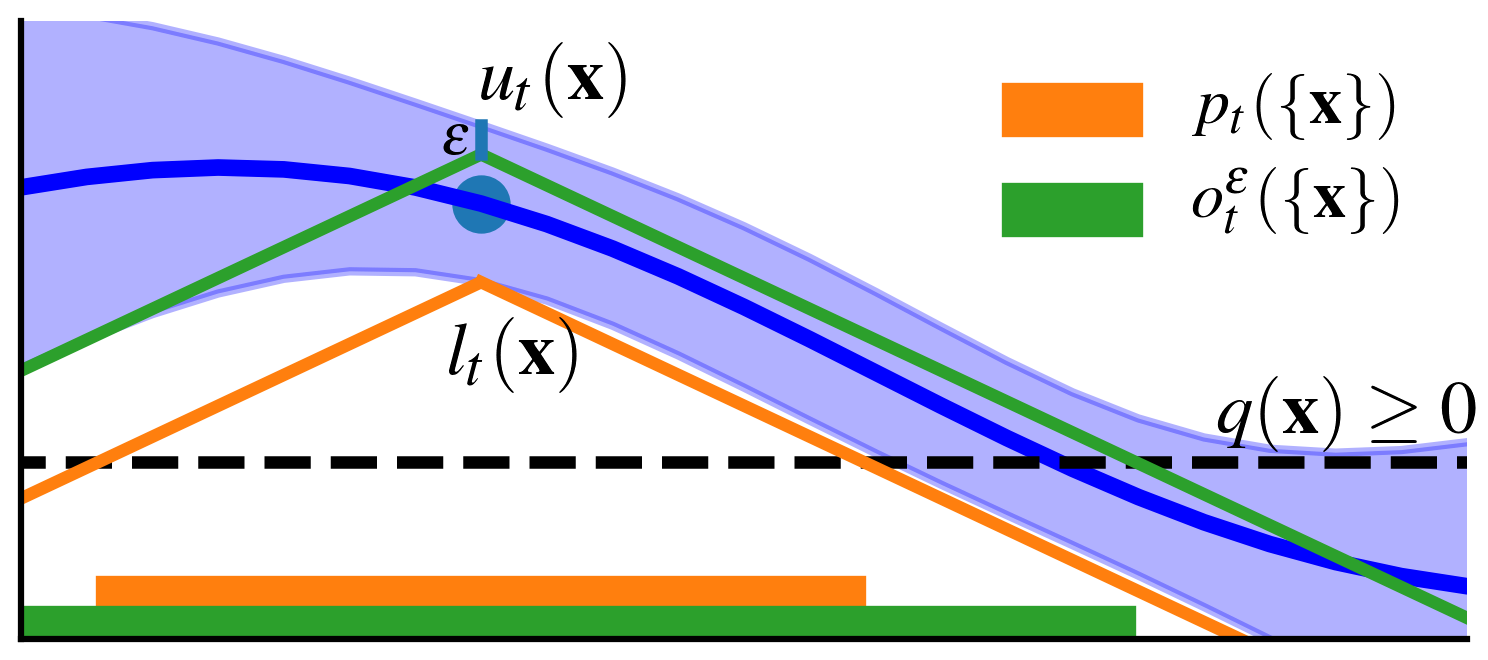}
        \caption{Expansion operators.}
        \label{fig:expansion_example}
    \end{subfigure}%
    \begin{subfigure}{0.35\textwidth}
        \includegraphics{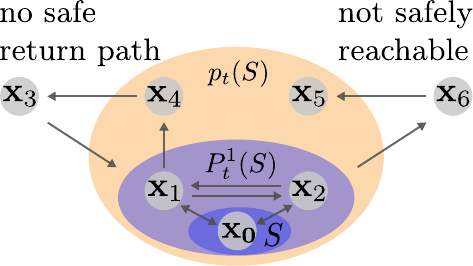}
        \caption{Long term safety in graph.}
        \label{fig:long_term_safety}
    \end{subfigure}%
    \caption{\cref{fig:expansion_example} shows the pessimistic and optimistic constraint satisfaction operators that use the confidence intervals on the constraint and its Lipschitz continuity to make inference about the safety of decisions that have not yet been evaluated. \cref{fig:long_term_safety} illustrates the long-term safety definition. While decisions in $p_t(S)$ are myopically safe, decisions in $P^1_t(S)$ are safe in the long-term. This excludes $\s_4$ and $\s_5$, as no safe path from/to them exists.}
    \label{fig:short_long_term_safety}
\end{figure*}

\paragraph{Optimistic oracle}
The \gls{IML} oracle $\mathcal{O}_i(S)$ suggests  decisions $\goal_i \in S$ to evaluate within a given subset $S$ of $\mathcal{D}$. To make the oracle efficient, we restrict its decision space to decisions that could optimistically be safe in the long and short term. In particular, we define the optimistic safe set $\Shato$ in \cref{alg:goose:line:intersect_sopt_with_ergodic} of \cref{alg:goose} based on the optimistic expansion operator introduced above. The oracle uses this set to suggest a potentially unsafe, candidate decision $\goal_i = \mathcal{O}_i(\Shato_t)$ in \cref{alg:goose:line:oracle_suggestion}. 

\paragraph{Safe evaluation}
We determine safety of the suggestion $\goal_i$ similarly to \citet{turchetta2016safe} by constructing the set $\Shatp_t$ of decisions that are safe to evaluate. However, while \cite{turchetta2016safe} use the one step pessimistic expansion operator in their definition, $P^{1}_t$, we use its limit set in \cref{alg:goose:line:Shatp} of \cref{alg:goose}, $\tilde{P}_t$. While both operators eventually identify the same safe set, our definition allows for a more efficient expansion. For example, consider the case where the graph over the decision space $\mathcal{G}$ is a chain of length $m$ and where, for all $j=1,\cdots,m$, the lower bound on the safety of decision $j-1$ guarantees the safety of decision $j$ with high probability. In this case, \cite{turchetta2016safe} require $m-1$ iterations to fully expand the safe set, while our classification requires only one. 

If we know that $\goal_i$ is safe to evaluate, i.e., $\goal_i \in \Shatp_t$, then the oracle obtains a noisy observation of $f(\goal_i)$ in \cref{alg:goose:line:evaluate_suggestion}. Otherwise \goose proceeds to safely learn about the safety of $\goal_i$ using a safe expansion strategy in lines \crefrange{alg:goose:line:while_unsafe}{alg:goose:line:intersect_sopt_with_ergodic} that we outline in the following. This routine is repeated until we can either include $\goal_i$ in $\Shatp_t$, in which case we can safely evaluate $f(\goal_i$), or remove it from the decision space $\Shato_t$ and query the oracle for a new suggestion. 
%

\begin{figure*}
\noindent
\begin{minipage}[t]{0.53\textwidth}
\centering
\begin{algorithm}[H]
\caption{\goose}
\label{alg:SMDP_priority}
\label{alg:goose}
\begin{algorithmic}[1]
  \algsetup{indent=1em}
  \STATE \textbf{Inputs:} \parbox[t]{0.75\linewidth}{%
          Lipschitz constant $L$, 
          Seed $S_0$, \newline Graph $\mathcal{G}$, Oracle $O$, Accuracy $\epsilon$.
            }
    \STATE $\Shatp_0 \gets S_0$, $\Shato_0 \gets \mathcal{D}, t \gets 0$, \\$l_0(\s)\gets 0$ for $\s \in S_0$
    \FOR{$k=1,2,\ldots$}
        \STATE $\goal_i \gets \mathcal{O}(\Shato_t)$
        \label{alg:goose:line:oracle_suggestion}
        \WHILE{$\goal_i \not \in \Shatp_t$}
        \label{alg:goose:line:while_unsafe}
            \STATE $\mathrm{SE}(\Shato_t, \Shatp_t, \mathcal{G},\goal_i)$, $t\gets t+1$
            \label{alg:goose:line:safe_exploration}
            %
            %
            \STATE $\Shatp_t \gets \tilde{P}_t(\Shatp_{t-1})$
            \label{alg:goose:line:Shatp}
            %
            %
            \STATE $\Shato_t \gets \tilde{O}_t^\epsilon(\Shatp_{t-1})$
            \label{alg:goose:line:intersect_sopt_with_ergodic}
            %
            \STATE \label{alg:goose:line:jumpto_oracle_query} \textbf{if} $\goal_i \not \in \Shato_t$ \textbf{then go to} \cref{alg:goose:line:oracle_suggestion}
        \ENDWHILE
        \STATE Evaluate $f(\goal_i)$ and update oracle 
        \label{alg:goose:line:evaluate_suggestion}
    \ENDFOR
\end{algorithmic}
\end{algorithm}
\end{minipage}
\hfill
\begin{minipage}[t]{0.46\textwidth}
\centering
\begin{algorithm}[H]
\caption{Safe Expansion (SE)}
\label{alg:SE}
\label{alg:se}
\begin{algorithmic}[1]
  \algsetup{indent=1em}
  \STATE {\bfseries Inputs:} $\Shato_t$, $\Shatp_t$,\,$\mathcal{G}$,\,$\goal$
    \STATE \label{alg:se:line:uncertain_states_W} $W_t^\epsilon \gets \{ \s \in \Shatp_t \mid u_t(\s) - l_t(\s) > \epsilon \}$
    \STATE \label{alg:se:line:adjacent} 
    $\Sadj_t(p) \gets \{ \s \in \Shato_t \setminus p^0_t(\Shatp_t) \mid h(\s) = p \}$
    %
    \STATE \label{alg:se:line:expansion_target}
    {\small // Highest priority targets in $\Sadj_t$ with expanders}
        $\priority^* \gets \max \priority \text{~~s.t.~~}  \left| G_t^\epsilon(\priority) \right| > 0$
    %
    \IF{\small optimization problem feasible}
        %
        \STATE \label{alg:se:line:uncertainty_sampling}
        $\s_t \gets \argmax_{\s \in G_t^\epsilon(\priority^*)} \, w_t(\s)$
        \STATE \label{alg:se:line:update_gp} 
        Update $\mathrm{GP}$ with $y_t = q(\s_t) + \eta_t$ 
    \ENDIF
    %
\end{algorithmic}
\end{algorithm}
\end{minipage}
\end{figure*}

\paragraph{Safe expansion} 
If the oracle suggestion $\goal_i$ is not considered safe,  $\goal_i \notin \Shatp_t$, \goose employs a goal-directed scheme to evaluate a safe decision $\s_t \in \Shatp_t$ that is informative about $q(\goal_i)$, see \cref{fig:set_illustration}.
In practice, it is desirable to avoid learning about decisions beyond a certain accuracy $\epsilon$, as the number of observations required to reduce the uncertainty grows exponentially with $\epsilon$ \citep{sui2018stagewise}. Thus, we only learn about decisions in $\Shatp_t$ whose safety values are not known $\epsilon$-accurately yet in Line \ref{alg:se:line:uncertain_states_W}, $W_t^\epsilon = \{\s \in \Shatp_t \mid u_t(\s) - l_t(\s) > \epsilon\}$, where $u_t(\s) - l_t(\s)$ is the width of the confidence interval at $\s$.

To decide which decision in $W_t^\epsilon$ to learn about, we first determine a set of learning targets outside the safe set (dark blue cross in \cref{fig:set_illustration}), and then learn about them efficiently within $\Shatp_t$. To quantify how useful a learning target $\s$ is to learn about $q(\goal_i)$, we use any given iteration-dependent heuristic $h_t(\s)$. We discuss particular choices later, but a large priority $h(\s)$ indicates a relevant learning target (dashed line, \cref{fig:set_illustration}). Since $p^0_t(\Shatp_{t})$ denotes the decisions that are known to satisfy the constraint with high probability and $\Shato_t$ excludes the decisions that are unsafe with high probability, $\Shato_t \setminus p^0_t(\Shatp_{t})$ indicates the decisions whose safety we are uncertain about. We sort them according to their priority and let $\Sadj_t(\priority)$ denote the subset of decision with equal priority.  

Ideally, we want to learn about the decisions with the highest priority. However, this may not be immediately possible by evaluating decisions within $W_t^\epsilon$. Thus, we must identify the decisions with the highest priority that we can learn about starting from $W_t^\epsilon$. Therefore, similarly to the definition of the optimistic safe set, we identify decisions $\s$ in $W_t^\epsilon$ that have a large enough plausible value $q(\s)$ that they could guarantee that $q(\z)\geq 0$ for some $\z$ in $\Sadj_t(\priority)$. However, in this case, we are only interested in decisions that can be instantly classified as safe (rather than eventually). Therefore, we focus on this set of \textit{potential immediate expanders}, 
$
    G_t^\epsilon(\priority)=\{\s \in W_t^\epsilon, \mid \exists\, \z \in \Sadj_t(\priority) \colon u_t(\s) - L d(\s, \z) \geq 0 \}.
$
%
In Line \ref{alg:se:line:expansion_target} of \cref{alg:SE} we select the decisions with the priority level $\priority^*$ such that there exist uncertain, safe decisions in $W_t^\epsilon$ that could allow us to classify a decision in $\Sadj_t(\priority^*)$ as safe and thereby expand the current safe set $\Shatp_t$. Intuitively, we look for the highest priority targets that can potentially be classified as safe by safely evaluating decisions that we have not already learned about to $\epsilon$-accuracy.

Given these learning targets $\Sadj_t(\priority^*)$ (blue cross, \cref{fig:set_illustration}), we evaluate the most uncertain decision in $G_t^\epsilon(\priority^*)$ (blue shaded, \cref{fig:set_illustration}) in Line \ref{alg:se:line:uncertainty_sampling} and update the GP model with the corresponding observation of $q(\s_t)$ in Line \ref{alg:se:line:update_gp}. This uncertainty sampling is restricted to a small set of decisions close to the goal. This is different from methods without a heuristic that select the most uncertain secision on the boundary of $\Shatp$ (green shaded in \cref{fig:set_illustration}). In fact, our method is equivalent to the one by \citet{turchetta2016safe} when an uninformative heuristic $h(\s) = 1$ is used for all $\s$.
We iteratively select and evaluate decisions $\s_t$ until we either determine that $\goal_i$ is safe, in which case it is added to $\Shatp$, or we prove that we can not safely learn about it for given accuracy $\epsilon$, in which case is removed from $\Shato$ and a the oracle is queried with an updated decision space for a new suggestion.

To analyze our algorithm, we define the largest set that we can learn about as $\Rbar_\epsilon(S_0)$. This set contains all the decisions that we could certify as safe if we used a full-exploration scheme that learns the safety constraint $q$ up to $\epsilon$ accuracy for all decisions inside the current safe set. This is a natural exploration target for our safe exploration problem (see \cref{app:definitions} for a formal definition).  We have the following main result, which holds for any heuristic:
\begin{restatable}{theorem}{mainresult}
Assume that $q(\cdot)$ is $L$-Lipschitz continuous w.r.t. $d(\cdot, \cdot)$ with ${\| q \|_k \leq B_q}$, $\sigma$-sub-Gaussian noise, ${S_0 \neq \emptyset}$, ${q(\s) \geq 0}$ for all~${\s \in S_0}$, and that, for any two decisions~${\s,\s' \in S_0}$, there is a path in the graph $\mathcal{G}$ connecting them within $S_0$. Let~$\beta_t^{1/2}=B_q+4\sigma \sqrt{\gamma_t + 1 + \mathrm{ln}(1/\delta)}$, then, for any $h_t:\mathcal{D}\rightarrow \mathbb{R}$, with probability at least ${1-\delta}$, we have
${
q(\s)\geq 0
}$
for any $\s$ visited by~\goose.
Moreover, let $\gamma_t$ denote the information capacity associated with the kernel $k$ and let~$t^*$ be the smallest integer such that
${
\frac{t^*}{\beta_{t^*}\gamma_{t^*}}\geq \frac{C\, |\Rbar_0(S_0)|}{\epsilon^2},
}$
with~${C = 8 / \log(1 + \sigma^{-2})}$, then there exists a $t\leq t^*$ such that, with probability at least~${1-\delta}$, 
%
$
\Rbar_\epsilon(S_0) \subseteq \Shato_t \subseteq \Shatp_t\subseteq \Rbar_0(S_0)$.
\label{thm:safety_and_complteness}
\end{restatable}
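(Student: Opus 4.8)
The argument conditions on a single high-probability event and then treats safety and completeness separately.

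\emph{Confidence event.} First I would invoke the kernelized confidence result of \citet{chowdhury2017kernelized} with the stated $\beta_t^{1/2}=B_q+4\sigma\sqrt{\gamma_t+1+\ln(1/\delta)}$ to obtain that, with probability at least $1-\delta$, the bounds $l_t(\s)\le q(\s)\le u_t(\s)$ hold simultaneously for all $\s\in\dom$ and all $t$. All subsequent claims are made on this event, which accounts for the whole $1-\delta$ probability; since $l_t,u_t$ are defined by intersection they are monotone in $t$, so a single application of the result suffices.

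\emph{Safety.} Every decision \goose{} visits lies in $\Shatp_t$: an oracle evaluation occurs only when $\goal_i\in\Shatp_t$, and an exploration sample satisfies $\s_t\in G_t^\epsilon(\priority^*)\subseteq W_t^\epsilon\subseteq\Shatp_t$. Hence it suffices to show, by induction on $t$, that $\Shatp_t$ contains only decisions with $q(\s)\ge 0$ that can be safely reached and departed within the graph. The base case $\Shatp_0=S_0$ is the assumption. For the step, $\Shatp_t=\tilde{P}_t(\Shatp_{t-1})\subseteq p_t(\Shatp_{t-1})$, and any $\s\in p_t(S)$ admits $\z\in S$ with $q(\s)\ge q(\z)-L\,d(\s,\z)\ge l_t(\z)-L\,d(\s,\z)\ge 0$, using Lipschitz continuity and the lower bound. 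The ergodicity intersection with $\Rerg(\cdot,\cdot)$, together with the assumed connectivity of $S_0$, guarantees a safe path to reach and return from each such decision, so traversal never leaves the safe set \citep{turchetta2016safe}.

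\emph{Completeness (the sandwich).} The inclusion $\Shatp_t\subseteq\Rbar_0(S_0)$ follows by monotonicity: since $l_t\le q$, the empirical pessimistic operator is dominated pointwise by the idealized operator built from the exact constraint, whose reachable fixpoint is exactly $\Rbar_0(S_0)$. The inclusion $\Rbar_\epsilon(S_0)\subseteq\Shato_t$ is the analogous optimistic statement and holds throughout, since $u_t\ge q$ makes the empirical optimistic operator dominate the idealized $\epsilon$-slack operator whose fixpoint is $\Rbar_\epsilon(S_0)$. The central step is the convergence $\Shato_t\subseteq\Shatp_t$. I would prove this is triggered as soon as $W_t^\epsilon=\emptyset$, i.e. $u_t(\s)-l_t(\s)\le\epsilon$ for all $\s\in\Shatp_t$: then for $\z\in\Shatp_t$ the optimistic condition $u_t(\z)-L\,d(\s,\z)-\epsilon\ge 0$ implies $l_t(\z)-L\,d(\s,\z)\ge 0$ because $l_t(\z)\ge u_t(\z)-\epsilon$, so $o_t^\epsilon(\Shatp_t)\subseteq p_t(\Shatp_t)$; iterating this nesting through the ergodicity operator gives $\tilde{O}_t^\epsilon(\Shatp_t)\subseteq\tilde{P}_t(\Shatp_t)$, i.e. $\Shato_t\subseteq\Shatp_t$.

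\emph{Sample complexity and the main obstacle.} It remains to show that some $t\le t^*$ with $W_t^\epsilon=\emptyset$ is reached, for any heuristic $h_t$. Whenever $\Shato_t\not\subseteq\Shatp_t$ the set $W_t^\epsilon$ is nonempty and the feasibility test in \cref{alg:se:line:expansion_target} succeeds, so \goose{} samples $\s_t$ with $w_t(\s_t)>\epsilon$. Combining $w_t(\s_t)\le 2\beta_t^{1/2}\sigma_{t-1}(\s_t)$ with the information-gain inequality $\sum_{n\le t}\sigma_{n-1}^2(\s_n)\le \frac{2}{\log(1+\sigma^{-2})}\gamma_t$ \citep{srinivas2009gaussian} yields $\sum_{n\le t} w_n^2(\s_n)\le C\beta_t\gamma_t$. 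The extra factor $|\Rbar_0(S_0)|$ enters because the explorable decisions are confined to $W_t^\epsilon\subseteq\Shatp_t\subseteq\Rbar_0(S_0)$, so at most $|\Rbar_0(S_0)|$ distinct regions must each be driven below width $\epsilon$; a pigeonhole over these together with the bound above forces termination once $\frac{t}{\beta_t\gamma_t}\ge \frac{C|\Rbar_0(S_0)|}{\epsilon^2}$. I expect the main difficulty to be exactly this last, heuristic-agnostic counting step: a bad $h_t$ could reorder which targets are pursued, so I must argue that the priority selection of $\priority^*$ and the structure of $G_t^\epsilon$ only permute the exploration order without inflating the total number of wide samples, since termination requires every relevant width to fall below $\epsilon$ regardless of order. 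Carrying this counting through the graph-reachability (ergodicity) structure, while simultaneously relating the empirical operators $\tilde{P}_t,\tilde{O}_t^\epsilon$ to the idealized reachable sets $\Rbar_0,\Rbar_\epsilon$, is the technical heart of the proof.
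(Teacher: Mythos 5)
Your overall architecture coincides with the paper's: the same confidence event from \citet{chowdhury2017kernelized}, safety reduced to the fact that \goose only ever evaluates points of $\Shatp_t$ (the paper inherits this directly from Theorem 2 of \citet{turchetta2016safe}, whose inductive argument is the one you sketch), the two unconditional inclusions $\Rbar_\epsilon(S_0)\subseteq\Shato_t$ and $\Shatp_t\subseteq\Rbar_0(S_0)$ proved by induction through the operators exactly as you indicate, and the same convergence mechanism: once $u_t(\s)-l_t(\s)\le\epsilon$ on $\Shatp$, each application of $o_t^\epsilon$ is dominated by $p_t^0$, and iterating through the ergodicity operator gives $\Shato_{t}\subseteq\Shatp_{t}$, which is \cref{lem:SoptConvergence} in the paper.

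The genuine gap is precisely where you flagged it, and your proposed aggregated argument would not close it. The bound $\sum_{n\le t}w_n^2(\s_n)\le C\beta_t\gamma_t$ combined with $w_n(\s_n)>\epsilon$ controls only the widths of \emph{sampled} points at their sampling times; it bounds the total number of wide samples by roughly $C\beta_t\gamma_t/\epsilon^2$ \emph{without} any $|\Rbar_0(S_0)|$ factor, which is a sign it answers the wrong question: it says nothing about the width of a point of $\Shatp_{t^*}$ that the heuristic never steered the sampler to, whereas your trigger $W_t^\epsilon=\emptyset$ requires \emph{every} such width to fall below $\epsilon$. No permutation-of-order argument repairs this, because a bad $h_t$ could in principle starve a point indefinitely. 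The paper instead does per-point bookkeeping. Writing $\mathcal{T}_t^\s$ for the times at which $\s$ itself is sampled, monotonicity of $w$ gives $|\mathcal{T}_t^\s|\,w_t^2(\s)\le\sum_{\tau\in\mathcal{T}_t^\s}w_\tau^2(\s)\le C\beta_t\gamma_t$ (\cref{lem:BoundUncertaintyWithNumberOfSamples}); with $T_t$ the smallest integer satisfying $T_t/(\beta_{t+T_t}\gamma_{t+T_t})\ge C/\epsilon^2$, any point sampled $T_{t^*}$ times has width at most $\epsilon$ (\cref{lem:NumberOfSamples}), and since \cref{alg:se} samples only points with width above $\epsilon$, no point is ever sampled more than $T_{t^*}$ times (\cref{lem:UpperBoundNumberOfSamples}). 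Because every sample lands in $\Shatp_{t^*}\subseteq\Rbar_0(S_0)$ and $t^*\ge|\Rbar_0(S_0)|\,T_{t^*}$, the per-point cap forces, by an exact pigeonhole, $|\mathcal{T}_{t^*}^\s|=T_{t^*}$ for \emph{all} $\s\in\Shatp_{t^*}$, hence $w_{t^*}(\s)\le\epsilon$ uniformly on $\Shatp_{t^*}$ — and this is heuristic-agnostic by construction, since the count never references the order in which points were chosen. The same counting on intervals where $\Shatp$ stagnates yields the strict-expansion step ($\Rbar_\epsilon(S_0)\setminus\Shatp_t\neq\emptyset$ implies $\Shatp$ grows within $|\Shatp_t|T_{t^*}$ further evaluations, \cref{lem:UncertaintyInExpanders,lem:ExpansionWithExpanders}), and since the safe set can expand at most $|\Rbar_0(S_0)|$ times, this is also where the $|\Rbar_0(S_0)|$ factor in $t^*$ genuinely comes from. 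To repair your write-up, replace the aggregated sum-plus-pigeonhole sketch with this per-point cap argument; the rest of your proof then goes through essentially as the paper's does.
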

\cref{thm:safety_and_complteness} guarantees that \goose is safe with high probability. Moreover, for any priority function $h$ in \cref{alg:se}, it upper bounds the number of measurements that \cref{alg:SMDP_priority} requires to explore the largest safely reachable region $\Rbar_\epsilon(S_0)$. Note that \goose only achieves this upper bound if it is required by the \gls{IML} oracle. In particular, the following is a direct consequence of \cref{thm:safety_and_complteness}:
\begin{restatable}{corollary}{CorMainResult}
\label{cor:MainResult}
Under the assumptions of~\cref{thm:safety_and_complteness}, let the \gls{IML} oracle be deterministic given the observations. Then there exists a set $S$ with $\Rbar_\epsilon(S_0) \subseteq S \subseteq \Rbar_0(S_0)$ so that $\goal_i = \mathcal{O}(S)$ for all $k \geq 1$.
\end{restatable}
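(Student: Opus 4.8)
The plan is to show that \goose driven by a deterministic oracle behaves exactly like that oracle run, without any safety interruptions, on a single fixed decision set $S$ with $\Rbar_\epsilon(S_0)\subseteq S\subseteq\Rbar_0(S_0)$. First I would identify $S$. By \cref{thm:safety_and_complteness}, on the event of probability at least $1-\delta$ there is an iteration $\bar{t}$ at which the optimistic set lies inside the pessimistic one, $\Rbar_\epsilon(S_0)\subseteq\Shato_{\bar{t}}\subseteq\Shatp_{\bar{t}}\subseteq\Rbar_0(S_0)$. Setting $S:=\Shato_{\bar{t}}$ gives the required sandwich. From iteration $\bar{t}$ onwards every suggestion the oracle can return lies in $\Shato_{\bar{t}}\subseteq\Shatp_{\bar{t}}$, hence is immediately certified safe, so \cref{alg:se} is never triggered, no further observations of $q$ are taken, and both $\Shato_t$ and $\Shatp_t$ remain frozen at their values at $\bar{t}$. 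In particular, for $t\ge\bar{t}$ the oracle is queried on exactly $\Shato_t=S$, so each evaluated decision equals $\mathcal{O}(S)$ by construction; the whole content of the corollary is therefore to handle the finitely many evaluations that may occur before $\bar{t}$.

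For those, I would use two structural facts. The oracle only receives a new observation of $f$ when a suggestion is certified safe and evaluated; suggestions discarded as unsafe are deleted from the decision space without an observation. Hence, within the block of \goose steps that produces the $k$-th evaluation, the oracle's observation history is constant, so its output is a deterministic function of the queried domain alone. For the acquisition-maximising oracles to which \goose is applied this function is consistent under domain restriction: if the maximiser over a domain happens to lie in a subdomain, it is also the maximiser over that subdomain. I would phrase the determinism hypothesis so that this independence-of-irrelevant-alternatives property is guaranteed.

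I would then prove decision-equivalence by induction on the evaluation count $k$. Suppose the first $k-1$ evaluated decisions, and thus the history fed to the oracle, coincide with those generated by running $\mathcal{O}$ on the fixed set $S$. For the $k$-th evaluation, \goose queries the oracle on a sequence of domains $\Shato_t$ while exploring; I would argue that each such domain contains $S$ and that the decision finally evaluated lies in $S$. Granting this, the evaluated decision is the oracle's maximiser over a domain $\Shato_t\supseteq S$ that lands in $S$, hence by the consistency property it equals the maximiser over $S$, i.e.\ $\mathcal{O}(S)$ under the current history; the discarded decisions have left $\Shato_t$ and so lie outside $S$, so they cannot affect this maximiser. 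This closes the induction and gives $\goal_i=\mathcal{O}(S)$ for all $k\ge 1$.

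I expect the main obstacle to be exactly the two containments used in the inductive step: that $S\subseteq\Shato_t$ for every $t$ and that every evaluated decision belongs to $S$. Both are delicate because of the $\epsilon$-margin in the optimistic operator $o_t^\epsilon$: a decision can be certified safe while lying within $\epsilon$ of the constraint boundary, in which case it fails the optimistic test and drops out of $\Shato_t$, so it need not survive into $S=\Shato_{\bar{t}}$. Resolving this requires the monotonicity properties of $l_t$, $u_t$ and of the expansion operators that underlie \cref{thm:safety_and_complteness}, together with an argument that such boundary decisions are not evaluated by the oracle before the sets freeze (or, alternatively, a choice of $S$ that absorbs them). A secondary, lighter task is to make the oracle-consistency hypothesis precise enough that the standard Bayesian-optimization and active-learning oracles qualify.
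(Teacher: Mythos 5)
Your first paragraph already \emph{is} the paper's proof: the paper disposes of the corollary in two sentences, asserting it is a direct consequence of \cref{thm:safety_and_complteness} because the pessimistic set can be expanded and the optimistic decision space contracted, within a finite number of constraint evaluations, until the former contains the latter; the set $S$ is the converged optimistic set, exactly as in your construction. Everything in your second and third paragraphs --- the induction on the evaluation count $k$, the independence-of-irrelevant-alternatives consistency hypothesis, and the two containments $S \subseteq \Shato_t$ and $\goal_i \in S$ --- is material the paper never attempts. So, measured against the paper's own proof, you are not missing anything it supplies; rather, you have correctly identified that the literal claim ``$\goal_i = \mathcal{O}(S)$ for all $k \geq 1$'' needs more than determinism plus \cref{thm:safety_and_complteness}.

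The obstacles you flag at the end are genuine and remain unresolved in the paper. Concretely: only $\Rbar_\epsilon(S_0) \subseteq \Shato_t$ is available (\cref{lem:ShatoContainsRbarEpsilon}); $\Shato_t$ is not monotone, and $\Shatp_t \not\subseteq \Shato_t$ in general, since a decision $\z$ with $w_t(\z) < \epsilon$ can support pessimistic expansion ($l_t(\z) - L\,d(\s,\z) \geq 0$) while failing the optimistic test ($u_t(\z) - L\,d(\s,\z) - \epsilon < 0$). Hence with $S = \Shato_{\bar{t}}$, a pre-convergence evaluation certified inside the $\epsilon$-band may lie in $\Shatp_{\bar{t}} \setminus S$, whereas with the alternative choice $S = \Shatp_{\bar{t}}$ (which also satisfies the sandwich, by \cref{thm:Completeness}) the post-convergence suggestions are computed on the strict subset $\Shato_{\bar{t}} \subseteq S$ and determinism alone does not transport the maximiser from the subset to $S$ --- your IIA property is needed in one direction or the other, and neither choice of $S$ closes the induction without it. The paper sidesteps all of this by effectively reading the corollary modulo the at most $t^*$ safety evaluations, as in its main-text gloss that ``up to $t^*$ safety evaluations, \goose retains the properties of the \gls{IML} oracle over $S$''; that weaker statement is precisely what your first paragraph establishes, and it is all the paper's two-sentence proof supports. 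If you want the statement verbatim, strengthening the hypothesis to a consistency property satisfied by argmax-of-acquisition oracles, as you propose, is the right repair.
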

That is, the oracle decisions $\goal_i$ that we end up evaluating are the same as those by an oracle that was given the safe set $S$ in \cref{cor:MainResult} from the beginning. This is true since the set $\Shato_t$ converges to this set $S$. Since \cref{thm:safety_and_complteness} bounds the number of safety evaluations by $t^*$, \cref{cor:MainResult}  implies that, up to $t^*$ safety evaluations, \goose retains the properties (e.g., no-regret) of the \gls{IML} oracle $\mathcal{O}$ over $S$.




\paragraph{Choice of heuristic} 
While our worst-case guarantees hold for \textit{any} heuristic, the empirical performance of \goose depends on this choice. 
We propose to use the graph structure directly and additionally define a positive cost for each edge between two nodes. For a given edge cost, we define $c(\s,\goal_k, \Shato_t)$ as the cost of the minimum-cost path from $\s$ to $\goal_k$ within the optimistic safe set $\Shato_t$, which is equal to $\infty$ if a path does not exist,  and we consider the priority $h(\s)=-c(\s,\goal_k, \Shato_t)$. Thus, the node $\s$ with the lowest-cost path to $\goal_k$ has the highest priority. This reduces the design of a general heuristic to a more intuitive weight assignment problem, where the edge costs determine the planned path for learning about $\goal_k$ (dashed line in \cref{fig:set_illustration}).
One option for the edge cost is the inverse mutual information between $\s$ and the suggestion $\goal_k$, so that the resulting paths contain nodes that are informative about $\goal_k$. Alternatively, having successive nodes in the path close to each other under the metric $d(\cdot, \cdot)$, so that they can be easily added to the safe set and eventually lead us to $\goal_k$,  can be desirable. Thus, increasing monotone functions of the metric $d(\cdot, \cdot)$ can be effective edge costs.

\section{Applications and Experiments}\label{sec:Experiments}

In this section, we introduce two safety-critical \gls{IML} applications, discuss the consequences of \cref{thm:safety_and_complteness} for these problems, and empirically compare \goose to stae-of-the-art competing methods.
In our experiments, we set $\beta_t=3$ for all $t\geq 1$ as suggested by \citet{turchetta2016safe}. This choice of $\beta_t$ ensures safety in practice, but leads to more efficient exploration than the theoretical choice in \cref{thm:safety_and_complteness} \citep{turchetta2016safe,wachi2018safe}. Moreover, since in practice it is hard to estimate the Lipschitz constant of an unknown function, in our experiments we use the confidence intervals to define the safe set and the expanders as suggested by \citet{berkenkamp2016bayesian}.

\subsection{Safe Bayesian optimization}
In safe BO we want to optimize the unknown function $f $ subject to the unknown safety constraint $q$, see \cref{sec:ProblemStatement}. In this setting, we aim to find the best input over the largest set we can hope to explore safely, $\Rbar_\epsilon(S_0)$. The performance of an agent is measured in terms of the $\epsilon$-safe regret $\argmax_{\s\in \Rbar_\epsilon(S_0)}f(\s) - f(\s_t)$ of not having evaluated the function at the optimum in $\Rbar_\epsilon(S_0)$.

We combine \goose with the unsafe \textsc{GP-UCB} \citep{srinivas2009gaussian} algorithm as an oracle. For computational efficiency, we do not use a fully connected graph, but instead connect decisions only to their immediate neighbors as measured by the kernel and assign equal weight to each edge for the heuristic $h$.
We compare \goose to \textsc{SafeOPT} \citep{sui2015safe} and \textsc{StageOPT} \citep{sui2018stagewise} in terms of \textit{$\epsilon$-safe average regret}. Both algorithms use safe exploration as a proxy objective, see \cref{fig:IllustrativeExample}.
 

\begin{figure}
    \begin{floatrow}
        \TopFloatBoxes
        \ffigbox[\FBwidth]{
                \includegraphics[scale=1.]{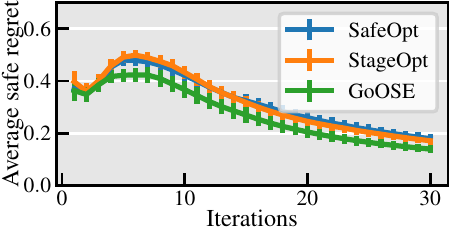}
                \includegraphics[scale=1.]{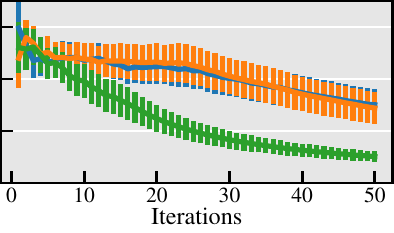}
            }
            {
            \caption{Average normalized $\epsilon$-safe regret for the safe optimization of GP samples over 40 (d=1, left) and 10 (d=2, right) samples. \goose only evaluates inputs that are relevant for the BO problem and, thereofore, it converges faster than its competitors.
            }
            \label{fig:SafeBO}
    }%
    \ttabbox[\Xhsize]{
        \begin{tabular}{l|c c}
             {} &\goose & \textsc{SEO} \\ \hline
             Sample & $\mathbf{30.0\,\%}$ & $38.4\,\%$ \\ \hline
             Cost & $12.7\,\%$ & $\mathbf{0.7}\,\%$ \\ \hline
             Time & $\mathbf{37.8}\,\%$ & $518\,\%$ 
        \end{tabular}
    }
    {
    \caption{Mars experiment performance normalized to \textsc{SMDP} in terms of 
    samples to find the first path, exploration cost and computation time per iteration.}
    \label{tab:baseline_comparison}
    }
    \end{floatrow}
\end{figure}

We optimize samples from a GP with zero mean and Radial Basis Function (RBF) kernel with variance $1.0$ and lengthscale $0.1$ and $0.4$ for a one-dimensional and two-dimensional, respectively. The observations are perturbed by i.i.d Gaussian noise with $\sigma=0.01$. For simplicity, we set the objective and the constraint to be the same, $f=q$. \cref{fig:SafeBO} (left) shows the average regret as a function of the number of evaluations $k+t$ averaged over 40 different samples from the GP described above over a one dimensional domain (200 points evenly distributed in  $[-1, 1]$). \cref{fig:SafeBO} (right) shows similar results averaged over 10 samples for a two dimensional domain ($25\times 25$ uniform grid in $[0,1]^2$).

These results confirm the intuition from \cref{fig:IllustrativeExample} that using safe exploration as a proxy objective reduces the empirical performance of safe BO algorithms. The impact is more evident in the two dimensional case where there are more points along the boundaries that are nor relevant to the optimization and that are evaluated for exploration purposes. 

\subsection{Safe shortest path in deterministic MDPs}

The graph that we introduced in \cref{sec:ProblemStatement} can model states (nodes) and state transitions (edges) in deterministic, discrete MDPs. Hence, \goose naturally extends to the goal-oriented safe exploration problem in these models.
We aim to find the minimum-cost safe path from a starting state $\start$ to a goal state $\goal$, without violating the unknown safety constraint, $q$. At best, we can hope to find the path within the largest safely learnable set $\Rbar_\epsilon(S_0)$ as in \cref{thm:safety_and_complteness} with cost $c(\start,\goal,\Rbar_\epsilon(S_0))$.

\paragraph{Algorithms} 
We compare \goose to \textsc{SEO} \citep{wachi2018safe} and \textsc{SMPD} \citep{turchetta2016safe}  in terms of samples required to discover the first path, total exploration cost and computation cost on synthetic and real world data.
The \textsc{SMDP} algorithm cannot take goals into account and serves as a benchmark for comparison. The \textsc{SEO} algorithm aims to safely learn a near-optimal policy for any given cost function and can be adapted to the safe shortest path problem by setting the cost to $c(\s)=-\|\s-\goal\|_1$. However, it cannot guarantee that a path to $\goal$ is found, if one exists. Since the goal $\goal$ is fixed, \goose does not need an oracle. 
For the heuristic we use and optimistic estimate of the cost of the safe shortest path from $\start$ to $\goal$ passing through $\s$; that is $ h_t(\s)=-\min_{\s'\in Pred(\s)}c(\start,\s,\Shatp_t)+\kappa c(\s,\goal,\Shato_t)$.
The first term is a conservative estimate of the safe optimal cost from $\start$ to the best predecessor of $\s$ in $\mathcal{G}$ and the second term is an optimistic estimate of the safe optimal cost from $\s$ to $\goal$ multiplied by $\kappa > 1$ to encourage early path discovery.
 Here, we use the predecessor node because $\phi_t(\s)=\infty$ for all $\s$ not in $\Shatp_t$. Notice that, if a safe path exists, \cref{thm:safety_and_complteness} guarantees that \goose finds the shortest one eventually.

\begin{figure*}
    \centering
    \begin{subfigure}[b]{0.32\textwidth}
        \includegraphics[scale=1]{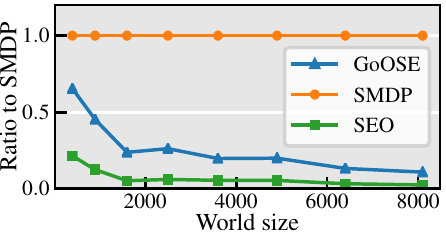}
        \caption{Cost of exploration.}
        \label{fig:traj_c}
    \end{subfigure}%
    \hfill%
    \begin{subfigure}[b]{0.32\textwidth}
        \includegraphics[scale=1]{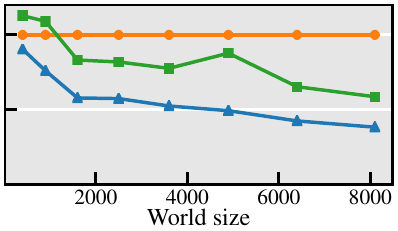}
        \caption{Samples to first path.}
        \label{fig:first_it}
    \end{subfigure}%
    \hfill%
    \begin{subfigure}[b]{0.32\textwidth}
        \includegraphics[scale=1]{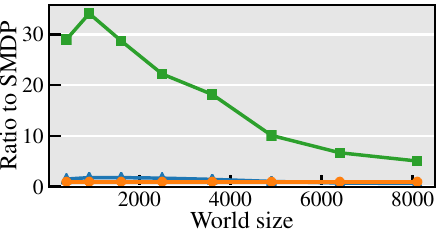}
        \caption{Computation per iteration.}
        \label{fig:time_per_it}
    \end{subfigure}%
    \caption{Performance of \goose  and \textsc{SEO} normalized to \textsc{SMDP} in terms of exploration cost, samples to find the first path and computation time per iteration as a function of the world size.
    }
    \label{fig:synthetic_experiments}
\end{figure*}

\paragraph{Synthetic data} 
Similarly to the setting in \cite{turchetta2016safe,wachi2018safe} we construct a two-dimensional grid world. At every location, the agent takes one of four actions: \textit{left}, \textit{right}, \textit{up} and \textit{down}. We use the state augmentation in \cite{turchetta2016safe} to define a constraint over state transitions. The constraint function is a sample from a GP with mean $\mu=0.6$ and RBF kernel with lengthscale $l=2$ and variance $\sigma^2=1$. If the agent takes an unsafe action, it ends up in a failure state, otherwise it moves to the desired adjacent state. We make the constraint independent of the direction of motion, i.e.,  $q(\s,\s')=q(\s',\s)$.
We generate 800 worlds by sampling 100 different constraints for square maps with sides of $20,30,40, \cdots, 90$ tiles and a source-target pair for each one. 

We show the geometric mean of the performance of \textsc{SEO} and \goose relative to \textsc{SMDP} as a function of the world size in \cref{fig:synthetic_experiments}. \cref{fig:first_it} shows that \goose needs a factor $2.5$ fewer samples than \textsc{SMDP}. 
\cref{fig:time_per_it} shows that the overhead to compute the heuristic of \goose is negligible, while the solution of the two MDPs \footnote{We use policy iteration. Policy evaluation is performed by solving a sparse linear system with SciPy~\citep{Scipy2019}. At iteration $t$, we initialize policy iteration with the optimal policy from  $t-1$.} required by \textsc{SEO} is computationally intense. 
\cref{fig:traj_c} shows that \textsc{SEO} outperforms \goose in terms of cost of the exploration trajectory. This is expected as \textsc{SEO} aims to minimize it, while \goose optimizes the sample-efficiency. However, it is easy to modify the heuristic of \goose to consider the exploration cost by, for example, reducing the priority of a state based on its distance from the current location of the agent.
In conclusion, \goose leads to a drastic improvement in performance with respect to the previously known safe exploration strategy with exploration guarantees, \textsc{SMDP}. Moreover, it achieves similar or better performance than \textsc{SEO} while providing exploration guarantees that \textsc{SEO} lacks.

\paragraph{Mars exploration} 
We simulate the exploration of Mars with a rover. In this context, communication delays between the rover and the operator on Earth make autonomous exploration extremely important, while the high degree of uncertainty about the environment requires the agent to consider safety constraints. In our experiment, we consider the \textit{Mars Science Laboratory} \citet[Sec. 2.1.3]{MSL2007MSL}, a rover deployed on Mars that can climb a maximum slope of $30^\circ$. We use Digital Terrain Models of Mars available from  
\citet{mcewen2007mars}.

We use a grid world similar to the one introduced above. The safety constraint is the absolute value of the steepness of the slope between two locations: given two  states $\s$ and $\s'$, the constraint over the state transition is defined as $q(\s, \s')=|H(\s)-H(\s')|/d(\s,\s')$, where $H(\s),H(\s')$ indicate the altitudes at $\s$ and $\s'$ respectively and $d(\s, \s')$ is the distance between them. We set conservatively the safety constraint to $q(\s, \s') \geq -\tan^{-1}(25^\circ)$. The step of the grid is  $10\mathrm{m}$.
We use square maps from 16 different locations on Mars with sides between 100 and 150 tiles. We generate 64 scenarios by sampling 4 source-target pairs for each map . We model the steepness with a GP with  Mat\'{e}rn kernel with $\nu=5/2$. 
We set the hyperprior on the lengthscale and on the standard deviation to be $Lognormal(30\mathrm{m}, 0.25\mathrm{m}^2)$ and $Lognormal(\tan(10^\circ), 0.04)$, respectively.
These encode our prior belief about the surface of Mars. Next, we take $1000$ noisy measurements at random locations from each map, which, in reality, could come from satellite images, to find a maximum a posteriori estimator of the hyperparameters to fine tune our prior to each  site. 

In Tab. 1, we show the geometric mean of the performance of \textsc{SEO} and \goose relative to \textsc{SMDP}. The results confirm those of the synthetic experiments but with larger changes in performance with respect to the benchmark due to the increased size of the world. 



\section{Conclusion} 
\label{sec:Conclusion}

We presented \goose, an add-on module that enables existing interactive machine learning algorithms to safely explore the decision space, without violating \textit{a priori} unknown safety constraints. Our method is provably safe and learns about the safety of decisions suggested by existing, unsafe algorithms. As a result, it is more data-efficient than previous safe exploration methods in practice. 

\paragraph{Aknowlegment.} This research was partially supported by the Max Planck ETH Center for Learning Systems and by the European Research Council (ERC) under the European Union’s Horizon 2020 research and innovation programme grantagreement No 815943.

\clearpage
\bibliographystyle{plainnat}
\bibliography{references.bib}

\clearpage
\appendix

In the following we present the proof of our result. 

\section{Definitions}\label{app:definitions}
For ease of consultation we repeat the relevant definitions here. We denote with $\mathcal{G}=(\dom,E)$ the directed graph describing the dependency among decisions introduced in \cref{sec:ProblemStatement}, where $\dom$ indicates the vertices of the graph, i.e., the decision space of the problem and $E\subseteq \dom\times \dom$ denotes the edges.
Baseline for safety:
\begin{equation} \label{app_def:Rsafe}
\Rsafe_\epsilon(S) = S \cup \{\s \in \dom \setminus S,|\, \exists \z \in S\colon q(\z)-\epsilon- L d(\s,\z) \geq 0\},
\end{equation}
The ergodicity operator is defined by intersecting the nodes that are reachable from a set $S$ and the nodes from which we can return to a set $\overline{S}$ through a path contained in another set $S$:
\begin{align}
    &\Rreach(S) = S \cup \{\s \in \dom \setminus S \,\vert\, \exists \z\in S: (\z,\s) \in E\}, \label{app_def:Rreach}\\
    &\Rreach_n(S) = \Rreach(\Rreach_{n-1}(S)) ~\textrm{with}~ \Rreach_1(S) = \Rreach(S), \\
    &\Rbreach(S) = \lim_{n \to \infty} \Rreach_n(S) \label{app_def:Rbreach}, \\
    &\Rret(S, \overline{S}) = \overline{S} \cup \{\s \in S \,\vert\, \exists \z \in \overline{S}: (\s,\z) \in E \}\\ \label{app_def:Rret}
    &\Rret_n(S, \overline{S}) = \Rret(S, \Rret_{n-1}(S, \overline{S})),
    ~\textrm{with}~ \Rret_1(S, \overline{S}) = \Rret(S, \overline{S}),\\
    &\Rbret(S, \overline{S}) = \lim_{n \to \infty} \Rret_n(S, \overline{S}) \label{app_def:Rbret},\\
    &\Rerg(S, \overline{S}) = \Rbreach(\overline{S}) \cap \Rbret(S, \overline{S}). \label{app_def:Rerg}
\end{align}
Here, we repeat the definition of the safe and ergodic  baseline introduced by \cite{turchetta2016safe}:
\begin{align}
    &R_{\epsilon}(S) = \Rsafe_\epsilon(S) \cap \Rerg(\Rsafe_\epsilon(S),\, S),\label{app_def:Repsilon}\\
    &R^n_{\epsilon}(S) = R_{\epsilon}(R^{n-1}_{\epsilon}(S)) ~\textrm{with} ~R^1_{\epsilon} = R_{\epsilon}(S),\label{app_def:RepsilonN}\\
    &\overline{R}_{\epsilon}(S)= \lim_{n\to\infty} R^n_{\epsilon}(S). \label{app_def:Rbar}
\end{align}
Optimistic and pessimistic constraint satisfaction operators:
\begin{align} 
    &o_t^\epsilon(S) = \{\s \in \dom, \,\vert\, \exists \z \in S:u_t(\z)-L d(\s,\z) - \epsilon\geq 0\} \label{app_def:g_operator},\\
    &p_t^\epsilon(S) = \{\s \in \dom, \,\vert\, \exists \z \in S:l_t(\z)-L d(\s,\z) - \epsilon\geq 0\}
\end{align}
Optimistic expansion operator:
\begin{align}
    &O_t^{\epsilon,1}(S) =o_t^\epsilon(S) \cap \Rerg(o_t^\epsilon(S), S),\\
    &O_t^{\epsilon, n}(S) = o_t^\epsilon(O_t^{\epsilon,n-1}(S)) \cap \Rerg(o_t^\epsilon(O_t^{\epsilon,n-1}(S)), S),\\
    &\tilde{O}_t^\epsilon(S) = \lim_{n \to \infty} O_t^{n,\epsilon}(S)
\end{align}
Pessimistic expansion operators:
\begin{align}
    &P_t^{\epsilon,1}(S) =p_t^\epsilon(S) \cap \Rerg(p_t^\epsilon(S), S),\\
    &P_t^{\epsilon, n}(S) = p_t^\epsilon(P_t^{\epsilon,n-1}(S)) \cap \Rerg(p_t^\epsilon(P_t^{\epsilon,n-1}(S)), S),\\
    &\tilde{P}_t^\epsilon(S) = \lim_{n \to \infty} P_t^{n,\epsilon}(S)
\end{align}
Pessimistic and optimistic safe and ergodic sets:
\begin{align}
    &\Shato_t = \tilde{O}_t^\epsilon(\Shatp_{t-1}) \label{app_def:Shato}\\
    &\Shatp_t = \tilde{P}_t^0(\Shatp_{t-1}),\label{app_def:Shatp}
\end{align}
Points with uncertainty above threshold:
\begin{equation} \label{app_def:W_set}
    W_t^\epsilon = \{\s \in \Shatp_t: ~w_t(\s) > \epsilon\}.
\end{equation}
Set of decisions with equal priority:
\begin{equation}
    \Sadj_t(\priority) = \{ \s\in \Shato_t\setminus p_t^0(, \Shatp_{t}):~ h_t(\s)=\priority \}
\end{equation}
Immediate expanders for nodes with priority $\priority$:
\begin{equation}
    G_t^\epsilon(\priority)=\{\s \in W_t^\epsilon, \mid \exists\, \z \in \Sadj_t(\priority) \colon u_t(\s) - L d(\s, \z) \geq 0 \}.,   
\end{equation}
Relevant priority:
\begin{equation}\label{app_def:p}
\priority^* = \max ~\priority, ~\textrm{s.t.}~~ |G_t^\epsilon(\priority)| >0.
\end{equation}
%

Notice that our definition of $\Shatp_t$ differs slightly from the one in \cite{turchetta2016safe} in that we alternate the pessimistic expansion step and the restriction to ergodic nodes until convergence, whereas \cite{turchetta2016safe} do it only once at each time step. In particular, we have $\Shatp_t=\lim_{n\to\infty} P^{0,n}_t(\Shatp_{t-1})$, while they use $\Shatp_t= P^{0,1}_t(\Shatp_{t-1})$. In practice, this does not make any difference since it is easy to verify that, by $t^*$, i.e., the time by when both \goose and the approach in \cite{turchetta2016safe} are guaranteed to converge in the worst case, the pessimistic ergodic safe sets should be the same for both methods. However, our new definition allows for a more efficient exploration. These new definitions would require us to show again some of the properties that were shown by \cite{turchetta2016safe} for $\Shatp$. However, due to our recursive definition of $\Shatp$, it is easy to see that it is possible to show these properties by induction over the index $n$. In this case, the lemmas introduced by \cite{turchetta2016safe} constitute the base case. At this point, it is sufficient to use the induction hypothesis and the monotonicity of the confidence interval shown in \cref{lem:monotonicity} together with basic properties of the $\Rerg$ operator discussed in \cref{lem:RergSubset} to prove the induction step. We  show how to do this explicitly in \cref{lem:NonDecresingS}. However, we do not explicitly repeat this reasoning for every lemma involving $\Shatp$ and we refer to \cite{turchetta2016safe} instead.

\section{Preliminary lemmas}
This section contains some basic lemmas about the sets defined above that will be used in subsequent sections to prove our main results.
\begin{restatable}{lemma}{monotonicity}
\label{lem:monotonicity}
\,$\forall \s \in \dom,\,u_{t+1}(\s)\leq u_t(\s),\,l_{t+1}(\s)\geq l_t(\s),\, w_{t+1}(\s)\leq w_t(\s)$.
\end{restatable}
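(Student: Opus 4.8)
The plan is to read the result directly off the recursive definition of the confidence bounds. Recall from the \textbf{Safe decisions} paragraph that, rather than using the raw posterior confidence interval $\mu_{t-1}(\s)\pm\beta_t\sigma_{t-1}(\s)$, the bounds are defined as the running intersection
\begin{equation*}
l_t(\s)=\max\bigl(l_{t-1}(\s),\,\mu_{t-1}(\s)-\beta_t\sigma_{t-1}(\s)\bigr),\qquad
u_t(\s)=\min\bigl(u_{t-1}(\s),\,\mu_{t-1}(\s)+\beta_t\sigma_{t-1}(\s)\bigr),
\end{equation*}
with $w_t(\s)=u_t(\s)-l_t(\s)$. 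The key observation is that $l_t$ is built from a maximum with its own previous value and $u_t$ from a minimum with its own previous value, so the monotonicity claims are immediate.

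Concretely, first I would fix an arbitrary $\s\in\dom$ and iteration $t$ and instantiate the definitions at index $t+1$. For the upper bound, $u_{t+1}(\s)=\min\bigl(u_t(\s),\,\mu_t(\s)+\beta_{t+1}\sigma_t(\s)\bigr)\leq u_t(\s)$, since a minimum of two quantities is at most either argument. Symmetrically, $l_{t+1}(\s)=\max\bigl(l_t(\s),\,\mu_t(\s)-\beta_{t+1}\sigma_t(\s)\bigr)\geq l_t(\s)$. The third claim then follows by subtraction: combining the two inequalities gives $w_{t+1}(\s)=u_{t+1}(\s)-l_{t+1}(\s)\leq u_t(\s)-l_t(\s)=w_t(\s)$. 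Since $\s$ and $t$ were arbitrary, all three inequalities hold for every $\s\in\dom$ and every $t$.

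There is essentially no hard step here; the only point worth flagging is that the statement would be \emph{false} for the unclipped GP bounds $\mu_{t-1}\pm\beta_t\sigma_{t-1}$, whose endpoints need not be monotone in $t$ because both $\mu_{t-1}$ and $\beta_t\sigma_{t-1}$ can move non-monotonically as data arrive. The running $\max$/$\min$ in the definition is precisely the device that enforces nested, shrinking confidence intervals, and this nestedness is what makes the lemma a one-line consequence. This monotonicity is in turn the workhorse used to argue, by induction on the expansion index $n$, that the sets $\Shatp_t$ and $\Shato_t$ evolve monotonically across iterations, as anticipated in the discussion preceding \cref{lem:NonDecresingS}.
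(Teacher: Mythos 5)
Your proof is correct and follows essentially the same route as the paper, which simply defers to Lemma~1 of \citet{turchetta2016safe}: that cited lemma establishes monotonicity in exactly this way, reading it off the running $\max$/$\min$ (intersected) definition of the confidence bounds and obtaining the width bound $w_{t+1}(\s)\leq w_t(\s)$ by subtraction. Your remark that the claim would fail for the unclipped bounds $\mu_{t-1}\pm\beta_t\sigma_{t-1}$ is a correct and worthwhile observation, though not needed for the proof itself.
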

\begin{proof}
See Lemma 1 in \cite{turchetta2016safe}.
\end{proof}
%
%
%
%
%
%
\begin{lemma} \label{lem:Rbar_ret_subset}
Given $S \subseteq R \subseteq \dom$ and $\overline{S} \subseteq \overline{R} \subseteq \dom$, it holds that $\Rbret(\overline{S},S)\subseteq \Rbret(\overline{R},R)$.
\end{lemma}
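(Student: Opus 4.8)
The plan is to reduce the statement to a monotonicity property of the single-step return operator $\Rret$ and then lift it to the limit $\Rbret$ by induction on the number of iterations. The key observation is that, reading off the definition $\Rret(A,B)=B \cup \{\s\in A \mid \exists \z\in B:(\s,\z)\in E\}$, the operator is monotone in \emph{both} arguments: if $A\subseteq A'$ and $B\subseteq B'$, then $\Rret(A,B)\subseteq\Rret(A',B')$. Indeed, any $\s$ in the left-hand side either lies in $B\subseteq B'$, or lies in $A\subseteq A'$ and has an edge to some $\z\in B\subseteq B'$; in either case $\s\in\Rret(A',B')$.

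Next I would induct on $n$ to show $\Rret_n(A,B)\subseteq\Rret_n(A',B')$ whenever $A\subseteq A'$ and $B\subseteq B'$. The base case $n=1$ is exactly the single-step statement. For the inductive step, the induction hypothesis gives $\Rret_{n-1}(A,B)\subseteq\Rret_{n-1}(A',B')$, and applying single-step monotonicity with first arguments $A\subseteq A'$ and second arguments $\Rret_{n-1}(A,B)\subseteq\Rret_{n-1}(A',B')$ yields $\Rret_n(A,B)=\Rret(A,\Rret_{n-1}(A,B))\subseteq\Rret(A',\Rret_{n-1}(A',B'))=\Rret_n(A',B')$.

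Finally I would pass to the limit. Since $\Rret(A,C)\supseteq C$ for every $C$, the sequence $(\Rret_n(A,B))_n$ is non-decreasing in $n$, so its limit equals its union, $\Rbret(A,B)=\bigcup_n\Rret_n(A,B)$ (and because $\dom$ is finite the union is in fact attained at some finite step). The per-$n$ inclusions proved above then give $\Rbret(A,B)\subseteq\Rbret(A',B')$ by taking unions. Instantiating $A=\overline{S}$, $A'=\overline{R}$, $B=S$, $B'=R$, so that the hypotheses $\overline{S}\subseteq\overline{R}$ and $S\subseteq R$ supply exactly $A\subseteq A'$ and $B\subseteq B'$, gives $\Rbret(\overline{S},S)\subseteq\Rbret(\overline{R},R)$, as claimed.

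There is no substantive obstacle here; this is a routine double-monotonicity argument. The only point requiring care is bookkeeping: the bars in the notation make it easy to mis-pair which inclusion hypothesis feeds which argument of $\Rbret$, so I would be explicit that $\overline{S}\subseteq\overline{R}$ controls the \emph{constraining} (first) argument and $S\subseteq R$ controls the \emph{base} (second) argument. The passage to the limit is justified by the non-decreasing nature of the iterates together with finiteness of $\dom$.
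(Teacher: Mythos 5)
Your proof is correct. Note that the paper does not actually prove this lemma in-line: it simply cites Lemma~7 of \citet{turchetta2016safe}, so you have supplied the full argument that the citation outsources. What you give is the expected one — single-step monotonicity of $\Rret$ in both arguments, lifted to $\Rret_n$ by induction, then to $\Rbret$ via the observation that $\Rret(A,C)\supseteq C$ makes the iterates non-decreasing, so the limit is the union (attained at a finite step since $\dom$ is finite). You also handle the one genuine pitfall correctly: the lemma's statement pairs $\overline{S}\subseteq\overline{R}$ with the first (constraining) slot and $S\subseteq R$ with the second (base) slot, which is swapped relative to how the operator is displayed in the definition $\Rret(S,\overline{S})$, and your explicit instantiation $A=\overline{S}$, $B=S$ gets this right. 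No gaps.
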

\begin{proof}
See Lemma 7 in \cite{turchetta2016safe}
\end{proof}
\begin{restatable}{lemma}{Rreach_nSubset}
\label{lem:Rreach_nSubset}
    For any $S, R \subseteq \dom$, for any $n \geq 1$ we have $\Rreach_n(S) \subseteq \Rreach_n(R)$.
\end{restatable}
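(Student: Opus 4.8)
The statement is the natural monotonicity of the $n$-step reachability operator, and I read the intended hypothesis as $S \subseteq R$ (in line with the analogous \cref{lem:Rbar_ret_subset}); without it the claim is false. The plan is to reduce everything to monotonicity of the one-step operator $\Rreach$ and then lift it to $\Rreach_n$ by a routine induction on~$n$.

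First I would establish the base ingredient: if $A \subseteq B \subseteq \dom$ then $\Rreach(A) \subseteq \Rreach(B)$. Take any $\s \in \Rreach(A)$. By the definition in \cref{app_def:Rreach}, either $\s \in A$, in which case $\s \in B \subseteq \Rreach(B)$, or $\s \in \dom \setminus A$ together with some $\z \in A$ satisfying $(\z, \s) \in E$. In the latter case $\z \in B$ since $A \subseteq B$, so either $\s \in B \subseteq \Rreach(B)$ already, or $\s \in \dom \setminus B$ and the same $\z \in B$ witnesses $(\z,\s) \in E$, giving $\s \in \Rreach(B)$. In all cases $\s \in \Rreach(B)$, hence $\Rreach(A) \subseteq \Rreach(B)$.

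With this in hand I would prove the lemma by induction on~$n$. For the base case $n = 1$, monotonicity of $\Rreach$ applied to $S \subseteq R$ gives $\Rreach_1(S) = \Rreach(S) \subseteq \Rreach(R) = \Rreach_1(R)$. For the inductive step, assume $\Rreach_{n-1}(S) \subseteq \Rreach_{n-1}(R)$; applying the one-step monotonicity with $A = \Rreach_{n-1}(S)$ and $B = \Rreach_{n-1}(R)$ yields $\Rreach_n(S) = \Rreach(\Rreach_{n-1}(S)) \subseteq \Rreach(\Rreach_{n-1}(R)) = \Rreach_n(R)$, which closes the induction.

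There is no substantial obstacle here; the only point requiring a moment of care is the case split in the one-step argument, where a node reached from $A$ may already lie inside $B$. This is handled by the elementary observation that $B \subseteq \Rreach(B)$ always holds, so such a node lies in $\Rreach(B)$ regardless of whether it is witnessed by an edge. The remainder is bookkeeping over the definitions.
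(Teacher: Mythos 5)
Your proof is correct and follows essentially the same route as the paper's: an induction on $n$ whose base case is the one-step monotonicity of $\Rreach$, which the paper imports as Lemma~8 of \citet{turchetta2016safe} and you instead verify directly from \cref{app_def:Rreach}, correctly restoring the implicit hypothesis $S \subseteq R$ that the lemma statement omits. If anything, your inductive step is tighter than the paper's, which argues elementwise that any $\s \in \Rreach_n(S)$ arises from some $\s' \in \Rreach_{n-1}(S)$ via a transition and thereby silently skips the case $\s \in \Rreach_{n-1}(S)$ itself, whereas your appeal to one-step monotonicity together with the observation $B \subseteq \Rreach(B)$ covers it.
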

We proceed by induction. For $n=1$, we have $\Rreach(S) \subseteq \Rreach(R)$ by Lemma 8 by \cite{turchetta2016safe}. For the inductive step, assume $\Rreach_{n-1}(S) \subseteq \Rreach_{n-1}(R)$. Consider $\s \in \Rreach_n (S)$. We know $\exists \s' \in \Rreach_{n-1}(S) \subseteq \Rreach_{n-1}(R), \, a\in \mathcal{A}(\s')$ such that $\s = f(\s', a)$, which implies $\s \in \Rreach_n (R)$.
\begin{restatable}{corollary}{RbreachSubset}
\label{cor:RbreachSubset}
    For any $S, R \subseteq \dom$, we have $\Rbreach(S) \subseteq \Rbreach(R)$.
\end{restatable}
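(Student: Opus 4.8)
The plan is to derive this directly from \cref{lem:Rreach_nSubset} by passing to the limit in $n$. Note first that the statement as printed should carry the hypothesis $S \subseteq R$ (otherwise it fails, e.g.\ for $S=\dom$, $R=\emptyset$); this matches \cref{lem:Rreach_nSubset}, whose base case invokes Lemma~8 of \cite{turchetta2016safe} precisely in the form $S \subseteq R \Rightarrow \Rreach(S) \subseteq \Rreach(R)$. I would therefore prove: for any $S \subseteq R \subseteq \dom$, we have $\Rbreach(S) \subseteq \Rbreach(R)$.

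The argument rests on two facts. First, the chain $\Rreach_n(S)$ is non-decreasing in $n$, since by \cref{app_def:Rreach} the operator $\Rreach$ only ever adds vertices, so $\Rreach_{n}(S) = \Rreach(\Rreach_{n-1}(S)) \supseteq \Rreach_{n-1}(S)$; because $\dom$ is finite, this increasing sequence stabilizes after finitely many steps and the limit $\Rbreach(S) = \lim_{n\to\infty}\Rreach_n(S)$ of \cref{app_def:Rbreach} equals $\Rreach_N(S)$ for some finite $N$, and likewise $\Rbreach(S) = \bigcup_{n\geq 1}\Rreach_n(S)$. Second, \cref{lem:Rreach_nSubset} gives, for every $n\geq 1$, the inclusion $\Rreach_n(S) \subseteq \Rreach_n(R)$.

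Combining these, I would chase an arbitrary element: take $\s \in \Rbreach(S)$. Then $\s \in \Rreach_n(S)$ for some finite $n$, and by \cref{lem:Rreach_nSubset} this yields $\s \in \Rreach_n(R) \subseteq \Rbreach(R)$; since $\s$ was arbitrary, $\Rbreach(S) \subseteq \Rbreach(R)$, which is the claim. There is essentially no obstacle here: the only point requiring a word of care is the well-definedness and stabilization of the limit, which follows immediately from monotonicity of $\Rreach_n$ and finiteness of $\dom$, so the corollary is a one-line consequence of the preceding lemma.
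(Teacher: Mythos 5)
Your proof is correct and is exactly the argument the paper intends: the corollary is stated without an explicit proof as an immediate consequence of \cref{lem:Rreach_nSubset}, obtained by passing to the limit in $n$ via the monotonicity of $\Rreach_n$ and finiteness of $\dom$, precisely as you do. Your observation that the hypothesis $S \subseteq R$ is tacitly required is also right; it is already needed in \cref{lem:Rreach_nSubset} and is stated explicitly in the analogous \cref{lem:Rbar_ret_subset}.
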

\begin{restatable}{lemma}{RergSubset}
\label{lem:RergSubset}
    For any $S, R \subseteq \dom$ and $\overline{S}, \overline{R} \subseteq \dom$,  we have $\Rerg(S, \overline{S}) \subseteq \Rerg (R, \overline{R})$.
\end{restatable}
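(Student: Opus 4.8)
The plan is to unfold the definition $\Rerg(S, \overline{S}) = \Rbreach(\overline{S}) \cap \Rbret(S, \overline{S})$ and handle the two factors of the intersection separately, exploiting that set containment is preserved under intersection. First I would note that the statement is to be read under the monotonicity hypotheses $S \subseteq R$ and $\overline{S} \subseteq \overline{R}$ (consistent with the phrasing of \cref{lem:Rbar_ret_subset,lem:Rreach_nSubset} and \cref{cor:RbreachSubset}); under these the two required containments follow directly from the preceding results.

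For the reachability factor, since $\overline{S} \subseteq \overline{R}$, \cref{cor:RbreachSubset} yields $\Rbreach(\overline{S}) \subseteq \Rbreach(\overline{R})$. For the return factor, I would invoke \cref{lem:Rbar_ret_subset}, which asserts monotonicity of $\Rbret$ in both of its arguments. The only care needed here is matching the order of arguments: in the definition of $\Rerg$ the operator $\Rbret$ is called with the containing set first and the return target second, i.e.\ $\Rbret(S, \overline{S})$, whereas \cref{lem:Rbar_ret_subset} is stated in the form $\Rbret(\overline{S}, S) \subseteq \Rbret(\overline{R}, R)$. Since that lemma is monotone separately in each slot, relabeling its two arguments turns the hypotheses $S \subseteq R$ and $\overline{S} \subseteq \overline{R}$ into exactly $\Rbret(S, \overline{S}) \subseteq \Rbret(R, \overline{R})$.

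Combining the two containments, and using that $A \subseteq A'$ and $B \subseteq B'$ imply $A \cap B \subseteq A' \cap B'$, I would conclude
\[
\Rerg(S, \overline{S}) = \Rbreach(\overline{S}) \cap \Rbret(S, \overline{S}) \subseteq \Rbreach(\overline{R}) \cap \Rbret(R, \overline{R}) = \Rerg(R, \overline{R}).
\]

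The proof is essentially immediate once the component monotonicities are in hand, so there is no deep obstacle. The one place where a mistake could creep in is the bookkeeping of the argument order between the definition of $\Rerg$ and the phrasing of \cref{lem:Rbar_ret_subset}; I would make the relabeling explicit to avoid applying the lemma to the wrong slots.
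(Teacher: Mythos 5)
Your proof is correct and matches the paper's: the paper's own proof is simply the observation that the claim follows from \cref{lem:Rbar_ret_subset} and \cref{cor:RbreachSubset}, applied factor-wise to $\Rerg(S,\overline{S}) = \Rbreach(\overline{S}) \cap \Rbret(S,\overline{S})$, exactly as you do. You additionally make explicit two points the paper leaves implicit --- the missing hypotheses $S \subseteq R$, $\overline{S} \subseteq \overline{R}$ and the argument-order relabeling in \cref{lem:Rbar_ret_subset} --- both of which are handled correctly.
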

\begin{proof}
    This follows from \cref{lem:Rbar_ret_subset} and \cref{cor:RbreachSubset}.
\end{proof}
\begin{restatable}{lemma}{NonDecreasingS}
\label{lem:NonDecresingS}
For any $t\geq 1$, $\Shatp_0\subseteq \Shatp_t \subseteq \Shatp_{t+1}$.
\end{restatable}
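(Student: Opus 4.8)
The plan is to prove the chain $\Shatp_0 \subseteq \Shatp_1 \subseteq \Shatp_2 \subseteq \cdots$ by induction on $t$, from which the stated $\Shatp_0 \subseteq \Shatp_t \subseteq \Shatp_{t+1}$ follows by transitivity. Recall $\Shatp_t = \tilde{P}_t^0(\Shatp_{t-1})$ with $\Shatp_0 = S_0$. The two elementary facts I would isolate first are: (i) the pessimistic constraint-satisfaction operator is monotone both in its seed and in the time index, i.e.\ $S \subseteq R$ and $t \le t'$ imply $p_t^0(S) \subseteq p_{t'}^0(R)$ --- seed monotonicity is immediate from the existential definition, and time monotonicity follows from $l_{t'}(\z) \ge l_t(\z)$ in \cref{lem:monotonicity}, which can only make $l_{t'}(\z) - L d(\s,\z) \ge 0$ easier to satisfy; and (ii) the ergodicity operator is monotone in both arguments, which is exactly \cref{lem:RergSubset}.

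For the base case I would show $\Shatp_0 \subseteq \Shatp_1$, i.e.\ that $\tilde{P}_1^0$ is extensive on $S_0$. Since $l_0(\s) = 0$ on $S_0$ and $l_1 \ge l_0$ by \cref{lem:monotonicity}, every $\z \in S_0$ satisfies $l_1(\z) - L d(\z,\z) = l_1(\z) \ge 0$, hence $S_0 \subseteq p_1^0(S_0)$. Together with the always-valid $S_0 \subseteq \Rerg(p_1^0(S_0), S_0)$ (both $\Rbreach$ and $\Rbret$ are extensive in their relevant argument), this gives $S_0 \subseteq P_1^{0,1}(S_0) \subseteq \tilde{P}_1^0(S_0) = \Shatp_1$.

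The heart of the argument is a monotonicity lemma for the limiting operator: for any $S \subseteq R$, $\tilde{P}_t^0(S) \subseteq \tilde{P}_{t+1}^0(R)$, proved by an inner induction on the expansion index $n$ showing $P_t^{0,n}(S) \subseteq P_{t+1}^{0,n}(R)$. For $n=1$ the containment $P_t^{0,1}(S) = p_t^0(S) \cap \Rerg(p_t^0(S), S) \subseteq p_{t+1}^0(R) \cap \Rerg(p_{t+1}^0(R), R) = P_{t+1}^{0,1}(R)$ follows directly from (i) applied to $p_t^0(S) \subseteq p_{t+1}^0(R)$ and from (ii) applied to $p_t^0(S) \subseteq p_{t+1}^0(R)$ and $S \subseteq R$. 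For the step, writing $A = P_t^{0,n-1}(S) \subseteq P_{t+1}^{0,n-1}(R) = B$ from the inner hypothesis, (i) gives $p_t^0(A) \subseteq p_{t+1}^0(B)$ and (ii) gives $\Rerg(p_t^0(A), S) \subseteq \Rerg(p_{t+1}^0(B), R)$, whose intersection is exactly $P_t^{0,n}(S) \subseteq P_{t+1}^{0,n}(R)$. Passing $n \to \infty$ yields the lemma. To close the outer induction I would then assume $\Shatp_{t-1} \subseteq \Shatp_t$ and instantiate the lemma with $S = \Shatp_{t-1}$, $R = \Shatp_t$, obtaining $\Shatp_t = \tilde{P}_t^0(\Shatp_{t-1}) \subseteq \tilde{P}_{t+1}^0(\Shatp_t) = \Shatp_{t+1}$, which is the hypothesis one step later. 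This matches the recipe announced in the appendix: the induction hypothesis plus \cref{lem:monotonicity} and \cref{lem:RergSubset}.

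The step I expect to require the most care is passing the pointwise containment $P_t^{0,n}(S) \subseteq P_{t+1}^{0,n}(R)$ through the limit defining $\tilde{P}$, which presupposes that both expansion sequences converge. In the finite domain $\dom$ this reduces to checking that each sequence is non-decreasing in $n$, i.e.\ that the seed lies in its own one-step expansion. For the seed $\Shatp_{t-1}$ this is \emph{not} guaranteed pointwise by $l_t \ge 0$ (a point admitted through $p_t^0$ need not have non-negative lower bound itself), but it follows from the fixed-point property of the previous step: as the limit at time $t-1$, $\Shatp_{t-1}$ satisfies $\Shatp_{t-1} \subseteq p_{t-1}^0(\Shatp_{t-1})$, which (i) upgrades to $\Shatp_{t-1} \subseteq p_t^0(\Shatp_{t-1})$. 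Hence the sequence is non-decreasing and stabilizes after finitely many steps, the limits are unions, and the containment is preserved. I would therefore carry this convergence/fixed-point property along as part of the inductive invariant rather than invoke it separately.
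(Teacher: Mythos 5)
Your proof is correct and takes essentially the same route as the paper's: an inner induction on the expansion index $n$ showing $P_t^{0,n}(\Shatp_{t-1}) \subseteq P_{t+1}^{0,n}(\Shatp_{t})$, using \cref{lem:monotonicity} for the $p_t^0$ step and \cref{lem:RergSubset} for the ergodicity step, nested inside an outer induction on $t$ that the paper leaves implicit (its base case is delegated to Lemma~9 of \citet{turchetta2016safe}, whose argument you reproduce directly). Your additional care in justifying convergence of the $n$-limit via the fixed-point property $\Shatp_{t-1} \subseteq p_{t-1}^0(\Shatp_{t-1})$ fills in a detail the paper glosses over rather than constituting a different approach.
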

\begin{proof}
Lemma 9 in \cite{turchetta2016safe} allows us to say $P_t^{0,1}(\Shatp_{t-1})\subseteq P_{t+1}^{0,1}(\Shatp_{t})$. Thus, we can assume $P_t^{0,n-1}(\Shatp_{t-1})\subseteq P_{t+1}^{0,n-1}(\Shatp_{t})$ as induction hypothesis. Let us consider $\s \in P_t^{0,n}(\Shatp_{t-1})$. We know there exists $\z\in P_t^{0,n-1}(\Shatp_{t-1}) \subseteq P_{t+1}^{0,n-1}(\Shatp_{t})$ such that $l_t(\z) - Ld(\s,\z) \geq 0$, which ,by \cref{lem:monotonicity}, implies $l_{t+1}(\z) - Ld(\s,\z) \geq 0$. This means that $p_t^0(P^{0,n-1}_t(\Shatp_{t-1})) \subseteq p_{t+1}^0(P^{0,n-1}_{t+1}(\Shatp_{t}))$. Applying \cref{lem:RergSubset}, we complete the induction step and show $P_t^{0,n}(\Shatp_{t-1}) \subseteq P_{t+1}^{0,n}(\Shatp_{t})$.
\end{proof}
%
%
%
%
%
%
%
%
%
%
%
%
\begin{restatable}{lemma}{ConfidenceInterval}
\citep[Thm. 2]{chowdhury2017kernelized}
\label{ConfidenceInterval}
Assume ~${\| q \|_k^2 \leq B_q}$, and $\sigma$-sub-Gaussian noise. If~$\beta_t^{1/2}=B_q+4\sigma \sqrt{\gamma_t + 1 + \mathrm{ln}(1/\delta)}$, then, for all~${t > 0}$ and all~$\s \in \dom$,~$|q(\s)-\mu_{t-1}(\s)|\leq \beta^{1/2}_t\sigma_{t-1}(\s)$ holds with probability at least~${1 - \delta}$.
\end{restatable}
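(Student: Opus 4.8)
This is precisely the self-normalized kernelized confidence bound of \citet{chowdhury2017kernelized}, so the plan is to reconstruct it through a bias/noise decomposition of the posterior mean followed by a self-normalized martingale tail bound. Write $\phi(\s)=k(\cdot,\s)$ for the canonical feature map of the RKHS, so that $q(\s)=\langle q,\phi(\s)\rangle_k$ with $\|q\|_k\leq B_q$, and collect the queried points into the operator $\Phi_{t-1}$ whose rows are the $\phi(\s_n)^\T$. Setting $V_{t-1}=\sigma^2 I+\Phi_{t-1}^\T\Phi_{t-1}$ and $\mathbf{e}_{t-1}=(\eta_1,\dots,\eta_{t-1})^\T$, the posterior mean is $\mu_{t-1}(\s)=\phi(\s)^\T V_{t-1}^{-1}\Phi_{t-1}^\T\mathbf{y}_{t-1}$, and the Woodbury identity gives the key bridge $\sigma_{t-1}^2(\s)=\sigma^2\,\phi(\s)^\T V_{t-1}^{-1}\phi(\s)$ between the posterior variance and the $V_{t-1}^{-1}$-norm of the feature. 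Substituting $\mathbf{y}_{t-1}=\Phi_{t-1}q+\mathbf{e}_{t-1}$ and using $\Phi_{t-1}^\T\Phi_{t-1}=V_{t-1}-\sigma^2 I$ yields
\[
q(\s)-\mu_{t-1}(\s)=\underbrace{\sigma^2\,\phi(\s)^\T V_{t-1}^{-1}q}_{\text{bias}}-\underbrace{\phi(\s)^\T V_{t-1}^{-1}\Phi_{t-1}^\T\mathbf{e}_{t-1}}_{\text{noise}},
\]
and I would bound the two pieces separately.

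For the bias I would apply Cauchy--Schwarz in the $V_{t-1}^{-1}$ inner product: since $\|V_{t-1}^{-1/2}\phi(\s)\|=\sigma_{t-1}(\s)/\sigma$ by the variance identity and $\|V_{t-1}^{-1/2}q\|\leq\|q\|_k/\sigma$ because $V_{t-1}\succeq\sigma^2 I$, the bias is at most $\sigma^2\cdot(\sigma_{t-1}(\s)/\sigma)(\|q\|_k/\sigma)=\|q\|_k\,\sigma_{t-1}(\s)\leq B_q\,\sigma_{t-1}(\s)$, which is the deterministic part of $\beta_t^{1/2}$. The same Cauchy--Schwarz step reduces the noise term to
\[
|\phi(\s)^\T V_{t-1}^{-1}\Phi_{t-1}^\T\mathbf{e}_{t-1}|\leq\frac{\sigma_{t-1}(\s)}{\sigma}\,\|S_{t-1}\|_{V_{t-1}^{-1}},\qquad S_{t-1}=\sum_{n=1}^{t-1}\eta_n\,\phi(\s_n),
\]
so everything reduces to controlling the self-normalized norm $\|S_{t-1}\|_{V_{t-1}^{-1}}$ of the accumulated noise.

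The remaining step --- which I expect to be the main obstacle --- is a self-normalized tail bound for the RKHS-valued martingale $S_{t-1}$. Following the method of mixtures (the RKHS lift of the Abbasi-Yadkori--P\'al--Szepesv\'ari argument), I would build from the $\sigma$-sub-Gaussian moment-generating bound a nonnegative supermartingale by mixing $\exp(\langle\theta,S_{t-1}\rangle-\tfrac12\|\theta\|_{V_{t-1}}^2)$ against a Gaussian measure over directions $\theta$ in feature space, and apply Ville's maximal inequality. This produces, simultaneously for all $t>0$ and with probability at least $1-\delta$, a self-normalized bound whose leading term is $\log\det(I+\sigma^{-2}\mathbf{K}_{t-1})$ together with the confidence term $\log(1/\delta)$. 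Invoking the information-capacity definition $\tfrac12\log\det(I+\sigma^{-2}\mathbf{K}_{t-1})\leq\gamma_{t-1}$ to replace the log-determinant, and tracking the constants arising from the sub-Gaussian parameter and the regularizer $\sigma^2$, the noise term is controlled by the stated coefficient $4\sigma\sqrt{\gamma_{t-1}+1+\ln(1/\delta)}$ times $\sigma_{t-1}(\s)$. Adding the two bounds gives $|q(\s)-\mu_{t-1}(\s)|\leq\beta_t^{1/2}\sigma_{t-1}(\s)$. Uniformity over $\s\in\dom$ is free, since every inequality is phrased through $\sigma_{t-1}(\s)$, and the anytime validity over all $t$ comes directly from the supermartingale construction; the genuine difficulty is making the Gaussian mixture well defined and integrable in the infinite-dimensional feature space, which is handled either by truncating to a finite basis and passing to the limit or by reducing to the $t$-dimensional determinant via the kernel trick.
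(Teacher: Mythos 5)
The paper does not prove this lemma itself---its ``proof'' is simply a citation to Theorem 2 of \citet{chowdhury2017kernelized}---and your sketch is a faithful reconstruction of exactly that cited proof: the bias/noise decomposition of the posterior mean via $V_{t-1}=\sigma^2 I+\Phi_{t-1}^\T\Phi_{t-1}$, the identity $\sigma_{t-1}^2(\s)=\sigma^2\phi(\s)^\T V_{t-1}^{-1}\phi(\s)$, Cauchy--Schwarz in the $V_{t-1}^{-1}$ inner product, and the self-normalized method-of-mixtures bound for the RKHS-valued martingale (including the finite-dimensional reduction needed to make the Gaussian mixture well defined) are precisely the ingredients used there. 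The only deviations are cosmetic and in the safe direction: Chowdhury and Gopalan obtain a noise coefficient of order $\sigma\sqrt{2(\gamma_{t-1}+1+\ln(1/\delta))}$ while the paper's $\beta_t^{1/2}$ allows the looser $4\sigma\sqrt{\gamma_t+1+\ln(1/\delta)}$, and since $\gamma_t$ is nondecreasing this slack is harmless, so your argument is correct and essentially the same as the one the paper relies on.
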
 
\begin{proof}
See Theorem 2 in \cite{chowdhury2017kernelized}.
\end{proof}
%

\begin{restatable}{lemma}{ChoiceOfBeta}
Let~$\beta_t^{1/2}=B_q+4\sigma \sqrt{\gamma_t + 1 + \mathrm{ln}(1/\delta)}$ and assume ~${\| q \|_k^2 \leq B_q}$, and $\sigma$-sub-Gaussian noise. Then, for all~${t > 0}$ and all~$\s \in \dom$, it holds with probability at least~${1 - \delta}$ that~$q(\s) \in C_t(\s)$.
\label{thm:beta}
\end{restatable}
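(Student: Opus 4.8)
The plan is to obtain \cref{thm:beta} as an immediate monotone-intersection refinement of the pointwise bound in \cref{ConfidenceInterval}. With the stated choice of $\beta_t$, \cref{ConfidenceInterval} guarantees that the event
\[
\mathcal{E} = \bigl\{\, |q(\s) - \mu_{t-1}(\s)| \leq \beta_t^{1/2}\sigma_{t-1}(\s) \ \text{ for all } t > 0 \text{ and all } \s \in \dom \,\bigr\}
\]
holds with probability at least $1-\delta$. Equivalently, on $\mathcal{E}$ we have $q(\s) \in Q_t(\s)$ for every $t$ and every $\s$, where $Q_t(\s) := [\,\mu_{t-1}(\s) - \beta_t^{1/2}\sigma_{t-1}(\s),\ \mu_{t-1}(\s) + \beta_t^{1/2}\sigma_{t-1}(\s)\,]$ is the instantaneous confidence set at step $t$. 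The key observation is that the uniformity over both $t$ and $\s$ is already delivered by \cref{ConfidenceInterval}, so no further union bound is required.

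First I would unroll the recursive definitions $l_t(\s) = \max\!\bigl(l_{t-1}(\s),\, \mu_{t-1}(\s) - \beta_t^{1/2}\sigma_{t-1}(\s)\bigr)$ and $u_t(\s) = \min\!\bigl(u_{t-1}(\s),\, \mu_{t-1}(\s) + \beta_t^{1/2}\sigma_{t-1}(\s)\bigr)$ to rewrite the confidence interval as a nested intersection,
\begin{equation*}
C_t(\s) = [\,l_t(\s),\, u_t(\s)\,] = C_0(\s) \cap \bigcap_{\tau=1}^{t} Q_\tau(\s),
\end{equation*}
where $C_0(\s) = [\,l_0(\s),\, u_0(\s)\,]$ is the seed interval fixed by \cref{alg:goose}, namely $l_0(\s) = 0$ and $u_0(\s) = +\infty$ for $\s \in S_0$, and $C_0(\s) = \mathbb{R}$ otherwise. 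Since $q(\s)$ is a fixed scalar that lies in every $Q_\tau(\s)$ on $\mathcal{E}$, it lies in the finite intersection $\bigcap_{\tau \leq t} Q_\tau(\s)$; hence it remains only to verify that $q(\s) \in C_0(\s)$.

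This base case is precisely where the remaining hypotheses enter. For $\s \notin S_0$ we have $C_0(\s) = \mathbb{R}$ and the membership is trivial. For $\s \in S_0$, the assumption $q(\s) \geq 0$ gives $q(\s) \geq 0 = l_0(\s)$, while $q(\s) \leq +\infty = u_0(\s)$, so $q(\s) \in C_0(\s)$ in all cases. Combining, on $\mathcal{E}$ we obtain $q(\s) \in C_0(\s) \cap \bigcap_{\tau=1}^{t} Q_\tau(\s) = C_t(\s)$ for all $t>0$ and all $\s \in \dom$, and since $\mathcal{E}$ has probability at least $1-\delta$, the claim follows.

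There is no deep obstacle here: the statement is a straightforward strengthening of \cref{ConfidenceInterval} from the instantaneous sets $Q_t$ to the accumulated, non-decreasing interval $C_t$. The only points requiring care are (i) making explicit that membership of the \emph{deterministic} value $q(\s)$ in each $Q_\tau(\s)$ is preserved under intersection, and (ii) handling the seed interval $C_0$ correctly through the assumption $q(\s) \geq 0$ on $S_0$. I would also flag the benign notational point that the $\beta_t$ appearing in the recursive definitions of $l_t, u_t$ in the main text coincides with the $\beta_t^{1/2}$ of \cref{ConfidenceInterval} once the square root is accounted for, so the two confidence constructions are consistent.
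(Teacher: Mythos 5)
Your proposal is correct and is essentially the paper's own argument: the paper simply defers to Corollary~1 of \citet{sui2015safe}, whose proof is exactly your reconstruction --- the uniform-in-$t$-and-$\s$ bound of \cref{ConfidenceInterval} combined with the monotone intersection $C_t(\s) = C_0(\s) \cap \bigcap_{\tau \leq t} Q_\tau(\s)$ and the seed base case $q(\s) \geq 0$ on $S_0$. Your two flagged points (the base case silently importing the seed-safety assumption from \cref{thm:safety_and_complteness}, and the $\beta_t$ versus $\beta_t^{1/2}$ notation) are accurate and handled correctly.
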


\begin{proof}
See Corollary 1 in \cite{sui2015safe}.
\end{proof}
%
%
%


\section{Safety}
The safety of our algorithm depends on the confidence intervals and on the safe and ergodic set $\Shatp_t$. Since these are defined as in \cite{turchetta2016safe}, their safety guarantees carry over to our case.
\begin{restatable}{theorem}{Safety}
\label{thm:Safety_of_trajectory}
For any node $\s$ along any trajectory induced by \cref{alg:SMDP_priority} on the graph $\mathcal{G}$ we have, with probability at least $1-\delta$, that $q(\s)\geq 0$.
\end{restatable}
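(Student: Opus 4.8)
The plan is to condition on the validity of the confidence intervals and then argue that every node the agent ever touches lies in the pessimistic safe set, which is entirely contained in the true safe set $\{\s : q(\s)\geq 0\}$. First I would invoke \cref{thm:beta} (equivalently \cref{ConfidenceInterval}) with the prescribed $\beta_t^{1/2}=B_q+4\sigma\sqrt{\gamma_t+1+\ln(1/\delta)}$ to obtain the high-probability event $E$ on which $q(\s)\in C_t(\s)$, i.e. $l_t(\s)\leq q(\s)\leq u_t(\s)$, simultaneously for all $t\geq 0$ and all $\s\in\dom$; this event has probability at least $1-\delta$. All remaining reasoning is deterministic given $E$, so it suffices to show that on $E$ every visited node satisfies the constraint.

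Second, I would prove the key claim that $\Shatp_t\subseteq\{\s : q(\s)\geq 0\}$ for all $t$, by induction on $t$. The base case is $\Shatp_0=S_0$, for which $q(\s)\geq 0$ by assumption. For the inductive step, recall $\Shatp_t=\tilde{P}_t^0(\Shatp_{t-1})=\lim_{n\to\infty}P_t^{0,n}(\Shatp_{t-1})$, so I would run an inner induction on the expansion index $n$. The heart of the argument is the pessimistic operator: if $\s\in p_t^0(S)$ then there is some $\z\in S$ with $l_t(\z)-L\,d(\s,\z)\geq 0$, and on $E$ together with $L$-Lipschitz continuity of $q$ we get
\[
q(\s)\;\geq\; q(\z)-L\,d(\s,\z)\;\geq\; l_t(\z)-L\,d(\s,\z)\;\geq\;0 .
\]
The ergodicity intersection with $\Rerg(\cdot,\cdot)$ only removes decisions, hence preserves the property, and passing to the limit in $n$ preserves it as well. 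This inner induction on $n$ is precisely the new ingredient relative to \cite{turchetta2016safe}, who use the single-step operator $P_t^{0,1}$; it is carried out exactly as in the template of \cref{lem:NonDecresingS}, using \cref{lem:monotonicity} and \cref{lem:RergSubset}.

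Third, I would verify that every node along the trajectory actually lies in some $\Shatp_t$. The oracle suggestion $\goal_i$ is evaluated only once the while-loop condition $\goal_i\in\Shatp_t$ holds (\cref{alg:goose:line:evaluate_suggestion}), and each safety query $\s_t$ selected in \cref{alg:se} belongs to $G_t^\epsilon(\priority^*)\subseteq W_t^\epsilon\subseteq\Shatp_t$. For the graph/MDP setting, where the agent must physically traverse edges between evaluation points, I would appeal to the fact that $\Shatp_t$ is defined through the $\Rerg$ operator, which guarantees that each of its decisions is reachable from $S_0$ and admits a return path to $S_0$ entirely within $\Shatp_t$; combined with the hypothesis that any two decisions in $S_0$ are connected by a path inside $S_0$, this ensures the intermediate nodes of the trajectory remain in $\Shatp_t$. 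These structural reachability/return properties of $\Rerg$ and $\Shatp$ carry over verbatim from \cite{turchetta2016safe}. Combining the three steps, every node on any \goose trajectory lies in $\Shatp_t\subseteq\{q\geq 0\}$ on $E$, giving $q(\s)\geq 0$ with probability at least $1-\delta$. I expect the trajectory-traversal bookkeeping to be the only delicate point, since the safety of evaluated decisions is immediate from the pessimistic construction, whereas the safety of the \emph{path} between them is what the ergodicity machinery is designed to secure.
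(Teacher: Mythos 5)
Your proposal is correct and takes essentially the same route as the paper: the paper's proof of \cref{thm:Safety_of_trajectory} simply defers to Theorem 2 of \citet{turchetta2016safe}, remarking in the appendix that properties of $\Shatp_t$ under the iterated operator $\tilde{P}_t$ (rather than the one-step $P_t^{0,1}$) carry over by an induction on the expansion index $n$ following the template of \cref{lem:NonDecresingS}. Your three steps---conditioning on the confidence event from \cref{thm:beta}, showing $\Shatp_t \subseteq \{\s : q(\s) \geq 0\}$ via the pessimistic operator plus $L$-Lipschitz continuity with the inner induction on $n$, and invoking the $\Rerg$ reachability/return structure for the safety of traversed paths---is precisely that delegated argument made explicit, with no gaps.
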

\begin{proof}
See Theorem 2 in \cite{turchetta2016safe}.
\end{proof}


\section{Completeness}\label{app:completeness}
In this section, we develop the core of our theoretical contribution. The analysis in \cite{turchetta2016safe} bounds the uncertainty of the expanders when the safe set does not change in an interval $[t_0, t_1]$ without considering the measurements collected prior to $t_0$. By considering this information, we extend their worst case sample complexity bound to our more general formulation of the safe exploration problem.

The following lemmas refer to the exploration steps, i.e., when the goal suggested by the oracle $\mathcal{O}$ lies outside of the pessimistic safe and ergodic set (\cref{alg:goose:line:while_unsafe}, \cref{alg:goose}). Notice that $t$ denotes the number of constraint evaluations and it differs from the iteration index of the algorithm $i$.

The core idea is the following: We bound the number constraint evaluations required at point in the domain to guarantee that its uncertainty is below $\epsilon$. We show that, as a consequence, if the safe and ergodic set does not change for long enough all the expanders have uncertainty below $\epsilon$. At this point we can either guarantee that the safe set expands or that the whole $\Rbar_\epsilon(S_0)$ has been explored. Since the analysis relies on the number of constraint evaluations at each point in the domain, we can evaluate them in any order as long as we exclude those that have an uncertainty below $\epsilon$. Therefore, our exploration guarantees hold for any priority function.

In the following, let us denote with $\mathcal{T}_t^\s=\{\tau_1, \cdots , \tau_j\}$ the set of steps where the constraint $q$ is evaluated at $\s$ by step $t$. Moreover, we assume, without loss of generality, $k(\s,\s)\leq 1$, i.e., we assume bounded variance.
\begin{restatable}{lemma}{BoundUncertaintyWithNumberOfSamples}
\label{lem:BoundUncertaintyWithNumberOfSamples}
For any $t\geq 1$ and for any $\s \in \dom$, it holds that $w_t(\s) \leq \sqrt{\frac{C_1 \gamma_t \beta_t}{|\mathcal{T}_t^\s|}}$, with $C_1 = 8/\log(1-\sigma^{-2})$.
\end{restatable}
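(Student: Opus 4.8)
The plan is to bound the confidence width $w_t(\s)$ at a fixed decision $\s$ first by the posterior standard deviation and then to control that posterior variance by the number of times $\s$ has been queried, using the maximum information gain $\gamma_t$ as the bridge. This is the standard Srinivas-type chaining argument, adapted so that the bound depends on $|\mathcal{T}_t^\s|$ rather than on the total horizon $t$.

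First I would observe that, by the definitions of $u_t$ and $l_t$ as intersections of the running bounds with $\mu_{t-1}\pm\beta_t^{1/2}\sigma_{t-1}$, we have $u_t(\s)\le\mu_{t-1}(\s)+\beta_t^{1/2}\sigma_{t-1}(\s)$ and $l_t(\s)\ge\mu_{t-1}(\s)-\beta_t^{1/2}\sigma_{t-1}(\s)$, hence $w_t(\s)=u_t(\s)-l_t(\s)\le 2\beta_t^{1/2}\sigma_{t-1}(\s)$. Squaring, it therefore suffices to prove the variance bound $\sigma_{t-1}^2(\s)\le \frac{2\gamma_t}{|\mathcal{T}_t^\s|\log(1+\sigma^{-2})}$, after which substituting $C_1=8/\log(1+\sigma^{-2})$ yields $w_t^2(\s)\le 4\beta_t\sigma_{t-1}^2(\s)\le \frac{C_1\gamma_t\beta_t}{|\mathcal{T}_t^\s|}$. (I note in passing that the $C_1$ in the statement should read $8/\log(1+\sigma^{-2})$, matching the constant $C$ of the main theorem.)

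The core step is to link a sum of posterior variances at $\s$ to $\gamma_t$. Writing $\mathcal{T}_t^\s=\{\tau_1<\cdots<\tau_j\}$ with $j=|\mathcal{T}_t^\s|$, the decision queried at step $\tau_l$ is exactly $\s$, so each term $\tfrac{1}{2}\log(1+\sigma^{-2}\sigma_{\tau_l-1}^2(\s))$ is one of the nonnegative summands of the running information gain $\tfrac12\sum_{n=1}^{t}\log(1+\sigma^{-2}\sigma_{n-1}^2(\s_n))\le\gamma_t$; discarding the other summands gives $\tfrac12\sum_{l=1}^{j}\log(1+\sigma^{-2}\sigma_{\tau_l-1}^2(\s))\le\gamma_t$. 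Using $\sigma_{\tau_l-1}^2(\s)\le k(\s,\s)\le 1$ together with the elementary inequality $a\le\frac{1}{\log(1+\sigma^{-2})}\log(1+a)$ for $a\in[0,\sigma^{-2}]$, applied with $a=\sigma^{-2}\sigma_{\tau_l-1}^2(\s)$, I would then convert the log-sum into $\sum_{l=1}^{j}\sigma_{\tau_l-1}^2(\s)\le\frac{2\gamma_t}{\log(1+\sigma^{-2})}$. Finally, since the posterior variance is non-increasing in the number of observations and $\tau_l\le t$, each $\sigma_{\tau_l-1}^2(\s)\ge\sigma_{t-1}^2(\s)$, giving $j\,\sigma_{t-1}^2(\s)\le\sum_{l}\sigma_{\tau_l-1}^2(\s)\le\frac{2\gamma_t}{\log(1+\sigma^{-2})}$; rearranging and combining with the width bound closes the argument.

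The main obstacle is conceptual rather than computational: one must recognize that the posterior variances $\sigma_{\tau_l-1}^2(\s)$, although evaluated after all the intervening observations at \emph{other} decisions, are precisely the summands that appear in the running information gain at the steps where $\s$ is sampled, so their total contribution is capped by $\gamma_t$ even when $\s$ is queried only a handful of times. The second delicate point is applying the monotonicity of the GP posterior variance in the correct direction, namely that additional data at any decision only shrinks $\sigma_t(\s)$, so that the earlier running variances dominate $\sigma_{t-1}^2(\s)$ and the sum can be lower-bounded by $j\,\sigma_{t-1}^2(\s)$. Everything else is routine.
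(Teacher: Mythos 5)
Your proof is correct and is essentially the paper's own argument: the paper chains $|\mathcal{T}_t^\s|\, w_t^2(\s) \leq \sum_{\tau \in \mathcal{T}_t^\s} w_\tau^2(\s) \leq \sum_{\tau \leq t} 4\beta_\tau \sigma_{\tau-1}^2(\s_\tau) \leq C_1 \beta_t \gamma_t$, invoking Lemma 5.4 of Srinivas et al.\ as a black box, while you apply monotonicity to the posterior variance instead of the width and reprove that lemma inline restricted to the steps in $\mathcal{T}_t^\s$ --- the same key observation in both cases being that the variances at the times $\s$ is queried are exactly summands of the cumulative information gain bounded by $\gamma_t$. One nit: your stated elementary inequality is false for $\sigma < 1$ (at $a = \sigma^{-2}$ it asserts $\sigma^{-2} \leq 1$); the correct form, from monotonicity of $x/\log(1+x)$, is $a \leq \frac{\sigma^{-2}}{\log(1+\sigma^{-2})}\log(1+a)$ for $a \in [0, \sigma^{-2}]$, and multiplying through by $\sigma^2$ yields precisely your displayed bound $\sum_{l}\sigma_{\tau_l-1}^2(\s) \leq \frac{2\gamma_t}{\log(1+\sigma^{-2})}$, so the conclusion stands --- and you are right that the paper's $C_1 = 8/\log(1-\sigma^{-2})$ is a typo for $8/\log(1+\sigma^{-2})$, matching the constant $C$ in the main theorem.
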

\begin{proof}
    \begin{align}
        |\mathcal{T}_t^\s| w^2_t(\s) & \leq \sum_{\tau \in \mathcal{T}_t^\s} w^2_\tau(\s) \label{eq1_lem_BoundUncertaintyWithNumberOfSamples}\\
        & \leq \sum_{\tau \in \mathcal{T}_t^\s} 4 \beta_\tau \sigma^2_{\tau-1}(\s), \label{eq2_lem_BoundUncertaintyWithNumberOfSamples}\\
        & \leq \sum_{\tau \leq t} 4 \beta_\tau \sigma^2_{\tau-1}(\s), \label{eq3_lem_BoundUncertaintyWithNumberOfSamples} \\ 
        & \leq  C_1 \gamma_t \beta_t, \label{eq4_lem_BoundUncertaintyWithNumberOfSamples}
    \end{align}
    with $C_1 = 8/\log(1-\sigma^{-2})$. Here, \cref{eq1_lem_BoundUncertaintyWithNumberOfSamples} holds because of \cref{lem:monotonicity}, and \cref{eq4_lem_BoundUncertaintyWithNumberOfSamples} holds because of Lemma 5.4 by \citet{srinivas2009gaussian}.
\end{proof}
\todo{with this $\beta$ things are not correct at the moment}
For the remainder of the paper, let us denote with $T_t$ the smallest positive integer such that $\frac{T_t}{\beta_{t+T_t}\gamma_{t+T_t}}\geq \frac{C_1}{\epsilon^2}$, with $C_1 = 8/\log(1-\sigma^{-2})$ and with $t^*$ the smallest positive integer such that $t^* \geq |\Rbar_0(S_0)|T_{t^*}$.
\begin{restatable}{lemma}{NumberOfSamples}
\label{lem:NumberOfSamples}
    For any $t \leq t^*$, for any $\s \in \dom$ such that $|\mathcal{T}^\s_t| \geq T_{t^*}$, it holds that $w_t(\s) \leq \epsilon$.
\end{restatable}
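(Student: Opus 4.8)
The plan is to chain together the per-point uncertainty bound from \cref{lem:BoundUncertaintyWithNumberOfSamples} with the defining inequality of $T_{t^*}$, using the monotonicity of the information capacity $\gamma_t$ and the scaling factor $\beta_t$ to align the indices. First I would apply \cref{lem:BoundUncertaintyWithNumberOfSamples} to the decision $\s$, which gives $w_t(\s) \leq \sqrt{C_1 \gamma_t \beta_t / |\mathcal{T}_t^\s|}$. Since the hypothesis guarantees $|\mathcal{T}_t^\s| \geq T_{t^*}$, enlarging the denominator can only decrease the right-hand side, so $w_t(\s) \leq \sqrt{C_1 \gamma_t \beta_t / T_{t^*}}$.

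The key observation is that $\gamma_t$ is nondecreasing in $t$, since it is the maximum of the mutual information over index sets of size at most $t$, and hence $\beta_t$, which is an increasing function of $\gamma_t$ through $\beta_t^{1/2}=B_q+4\sigma\sqrt{\gamma_t+1+\ln(1/\delta)}$, is nondecreasing as well. Because $t \leq t^*$ and $T_{t^*} \geq 1$, we have $t \leq t^* < t^* + T_{t^*}$, so that $\gamma_t \beta_t \leq \gamma_{t^*+T_{t^*}}\,\beta_{t^*+T_{t^*}}$. Substituting this into the previous bound yields $w_t(\s) \leq \sqrt{C_1 \gamma_{t^*+T_{t^*}}\,\beta_{t^*+T_{t^*}} / T_{t^*}}$.

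To conclude, I would invoke the definition of $T_{t^*}$ as the smallest integer satisfying $T_{t^*} / (\beta_{t^*+T_{t^*}}\gamma_{t^*+T_{t^*}}) \geq C_1/\epsilon^2$, which rearranges to $C_1 \beta_{t^*+T_{t^*}}\gamma_{t^*+T_{t^*}} / T_{t^*} \leq \epsilon^2$. Plugging this into the bound gives $w_t(\s) \leq \sqrt{\epsilon^2} = \epsilon$, as desired.

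I expect the only delicate point to be the index bookkeeping: the definition of $T_{t^*}$ evaluates $\beta$ and $\gamma$ at $t^* + T_{t^*}$, not at $t$, so the argument genuinely relies on the monotonicity of $\gamma_t\beta_t$ together with $t \le t^* + T_{t^*}$ to make the two expressions comparable. Once this alignment is established, the remainder is purely algebraic.
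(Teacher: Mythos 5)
Your proof is correct and takes essentially the same route as the paper's: both chain the bound $w_t(\s) \leq \sqrt{C_1\gamma_t\beta_t/|\mathcal{T}_t^\s|}$ from \cref{lem:BoundUncertaintyWithNumberOfSamples} with the defining inequality of $T$ and the monotonicity of $\beta_t\gamma_t$. The only bookkeeping difference is that you invoke the definition at index $t^*$ directly (using $t \leq t^* + T_{t^*}$), whereas the paper first passes from $T_{t^*}$ down to $T_t$ via the monotonicity of $T_t$ in $t$ (cited from Sui et al.) and applies the definition at index $t$; since both variants still need $\beta_t\gamma_t$ nondecreasing, yours is marginally more self-contained because it dispenses with that auxiliary fact.
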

\begin{proof}
    Since $T_t$ is an increasing function of $t$ \citep{sui2015safe}, we have $|\mathcal{T}^\s_t| \geq T_{t^*} \geq T_t$. Therefore, using \cref{lem:BoundUncertaintyWithNumberOfSamples} and the definition of $T_t$, we have 
    \begin{equation}
        w_t(\s) \leq \sqrt{\frac{C_1 \gamma_t \beta_t}{T_t}} \leq \sqrt{\frac{C_1 \gamma_t \beta_t \epsilon^2}{C_1\gamma_{t+T_t} \gamma_{t+T_t}}} \leq \sqrt{\frac{\gamma_t \beta_t}{\gamma_{t+T_t} \gamma_{t+T_t}}}~ \epsilon \leq \epsilon,
    \end{equation}
    where the last inequality comes from the fact that both $\beta_t$ and $\gamma_t$ are increasing functions of $t$.
\end{proof}
\begin{restatable}{lemma}{UpperBoundNumberOfSamples}
\label{lem:UpperBoundNumberOfSamples}
    For any $t \leq t^*$, $|\mathcal{T}_t^\s| \leq T_{t^*}$, for any $\s \in \Shatp_t$.
\end{restatable}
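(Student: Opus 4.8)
The plan is to prove \cref{lem:UpperBoundNumberOfSamples} by contradiction, showing that the algorithm never samples a decision whose uncertainty has already dropped to $\epsilon$ or below. Suppose, for contradiction, that there exists some $\s \in \Shatp_t$ with $|\mathcal{T}_t^\s| > T_{t^*}$, and let $\tau$ be the \emph{last} step at or before $t$ at which \goose evaluated the constraint at $\s$. By minimality of this last evaluation, we have $|\mathcal{T}_{\tau-1}^\s| \geq T_{t^*}$, i.e. the decision $\s$ had already accumulated at least $T_{t^*}$ samples \emph{before} step $\tau$. Since $\tau - 1 \leq t \leq t^*$, \cref{lem:NumberOfSamples} applies and gives $w_{\tau-1}(\s) \leq \epsilon$.

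The key step is then to observe that a decision with width at most $\epsilon$ is excluded from every candidate set that \cref{alg:se} can draw a sample from. Concretely, \goose evaluates only decisions in $G_t^\epsilon(\priority^*)$ (Line~\ref{alg:se:line:uncertainty_sampling} of \cref{alg:SE}), and by definition $G_t^\epsilon(\priority) \subseteq W_t^\epsilon$, where $W_{\tau-1}^\epsilon = \{\s \in \Shatp_{\tau-1} : w_{\tau-1}(\s) > \epsilon\}$. Since $w_{\tau-1}(\s) \leq \epsilon$, we have $\s \notin W_{\tau-1}^\epsilon$, hence $\s \notin G_{\tau-1}^\epsilon(\priority^*)$, so $\s$ cannot be the point selected in Line~\ref{alg:se:line:uncertainty_sampling} at step $\tau$. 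This contradicts the assumption that $\s$ was evaluated at step $\tau$. Therefore no decision in $\Shatp_t$ can exceed $T_{t^*}$ samples, which is the claim.

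The main obstacle is getting the monotonicity bookkeeping exactly right: I need to confirm that the width is \emph{non-increasing} in $t$ (which is \cref{lem:monotonicity}), so that once $w_{\tau-1}(\s) \leq \epsilon$ it stays below $\epsilon$ at all later steps and the decision is permanently excluded from $W^\epsilon$ and hence from $G_t^\epsilon$. A subtle point requiring care is the off-by-one indexing in counting samples: I must be careful that ``having $T_{t^*}$ prior samples'' is precisely the threshold at which \cref{lem:NumberOfSamples} forces $w \leq \epsilon$, so that the $(T_{t^*}+1)$-th sample is already forbidden. One also needs $\s \in \Shatp_{\tau-1}$ so that membership in $W_{\tau-1}^\epsilon$ is well-posed; this follows from $\s \in \Shatp_t$ together with the monotonicity of the pessimistic safe set, \cref{lem:NonDecresingS}, which gives $\Shatp_{\tau-1} \subseteq \Shatp_t$ and, since $\s$ was evaluated it was in the safe set at that time. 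Assembling these facts, the contradiction is immediate and the bound $|\mathcal{T}_t^\s| \leq T_{t^*}$ follows for every $\s \in \Shatp_t$.
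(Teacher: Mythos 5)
Your proof is correct and takes essentially the same route as the paper's: both rest on the fact that the sampling rule only ever selects points in $W_t^\epsilon$ (since $G_t^\epsilon(\priority) \subseteq W_t^\epsilon$) together with \cref{lem:NumberOfSamples} and the monotonicity of the width from \cref{lem:monotonicity}. Your last-evaluation-step contradiction is just a more explicit bookkeeping of the paper's direct observation that once $|\mathcal{T}_t^\s| = T_{t^*}$ the point has width at most $\epsilon$ and hence is never evaluated again.
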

\begin{proof}
    According to Line \ref{alg:se:line:uncertainty_sampling} of \cref{alg:SE}, we only evaluate the constraint at points $\s\in \dom$ if $w_t(\s) > \epsilon$. From \cref{lem:NumberOfSamples} we know that $|\mathcal{T}_t^\s| = T_{t^*} \implies w_t(\s) \leq \epsilon$. Thus, if $|\mathcal{T}_t^\s|=T_{t^*}$, $\s$ is not evaluated anymore, which means that $|\mathcal{T}_t^\s|$ cannot grow any further.
\end{proof}
\begin{lemma} \label{lem:Superset}
$\forall t\geq 0,\,\Shatp_{t}\subseteq \overline{R}_{0}(S_0)$ with probability at least $1-\delta$.
\end{lemma}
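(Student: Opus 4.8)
The plan is to prove the inclusion by a double induction, on the evaluation index $t$ (outer) and on the number of expansion steps $n$ (inner), exactly as suggested in \cref{app:definitions}. The entire argument is conditioned on the high-probability event of \cref{ConfidenceInterval} (equivalently \cref{thm:beta}), under which $l_t(\s)\leq q(\s)$ holds for all $t$ and all $\s\in\dom$ simultaneously; this event has probability at least $1-\delta$, which accounts for the probabilistic qualifier in the statement. The single pointwise fact that drives everything is that, on this event, $l_t(\z)-Ld(\s,\z)\geq 0$ implies $q(\z)-Ld(\s,\z)\geq 0$, so the pessimistic one-step operator is dominated by the safety baseline: $p_t^0(S)\subseteq \Rsafe_0(S)$ for every fixed $S$.

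For the outer base case $t=0$ we have $\Shatp_0=S_0\subseteq\overline{R}_0(S_0)$, since $R_0$ is expansive. For the outer step I assume $\Shatp_{t-1}\subseteq\overline{R}_0(S_0)$ and show $\Shatp_t=\tilde{P}_t^0(\Shatp_{t-1})\subseteq\overline{R}_0(S_0)$ by an inner induction establishing $P_t^{0,n}(\Shatp_{t-1})\subseteq\overline{R}_0(S_0)$ for every $n$; taking the limit over $n$ then yields the claim for $\Shatp_t$. The inner base case $n=0$ is the outer hypothesis. For the inner step, write $A=P_t^{0,n-1}(\Shatp_{t-1})$, assumed to satisfy $A\subseteq\overline{R}_0(S_0)$. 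Combining the domination $p_t^0(A)\subseteq\Rsafe_0(A)$ with monotonicity of $\Rsafe_0$ gives $p_t^0(A)\subseteq \Rsafe_0(\overline{R}_0(S_0))$, while using $\Shatp_{t-1}\subseteq\overline{R}_0(S_0)$ together with monotonicity of $\Rerg$ (\cref{lem:RergSubset}) gives $\Rerg(p_t^0(A),\Shatp_{t-1})\subseteq \Rerg(\Rsafe_0(\overline{R}_0(S_0)),\overline{R}_0(S_0))$. Intersecting,
\begin{equation*}
P_t^{0,n}(\Shatp_{t-1}) \subseteq \Rsafe_0(\overline{R}_0(S_0)) \cap \Rerg(\Rsafe_0(\overline{R}_0(S_0)),\overline{R}_0(S_0)) = R_0(\overline{R}_0(S_0)).
\end{equation*}

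The proof is then closed by the fixed-point identity $R_0(\overline{R}_0(S_0))=\overline{R}_0(S_0)$: on the finite domain $\dom$ the sequence $R_0^n(S_0)$ is non-decreasing, because $R_0$ is expansive, and therefore stabilizes at its limit $\overline{R}_0(S_0)$, which is a fixed point of $R_0$. Hence $P_t^{0,n}(\Shatp_{t-1})\subseteq\overline{R}_0(S_0)$, completing both inductions. I expect the main obstacle to be purely bookkeeping around the $\Rerg$ term: one must match the return-set second argument ($\Shatp_{t-1}$ on the left versus $\overline{R}_0(S_0)$ on the right) and invoke monotonicity in both arguments of $\Rerg$, while verifying the expansiveness of $R_0$ needed for the fixed-point step. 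These reduce to the elementary monotonicity and reachability properties already recorded in \cref{lem:RergSubset,cor:RbreachSubset,lem:Rbar_ret_subset}, so no new machinery is required beyond the confidence-interval domination that opens the argument.
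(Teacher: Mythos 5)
Your proof is correct, and it is essentially the argument the paper intends but does not write out: the paper's proof of \cref{lem:Superset} is a bare pointer to Lemma 22 of \citet{turchetta2016safe}, combined with the appendix's blanket remark that, under the new limit-set definition $\Shatp_t = \tilde{P}^0_t(\Shatp_{t-1})$, every such property must be re-established by an inner induction over the expansion index $n$ (a pattern the paper only spells out for \cref{lem:NonDecresingS}). You execute precisely that program in full, and with the right ingredients: the confidence-interval domination $p_t^0(S) \subseteq \Rsafe_0(S)$ on the event of \cref{thm:beta}, monotonicity of $\Rsafe_0$ and of $\Rerg$ in both arguments, and the fixed-point identity $R_0(\overline{R}_0(S_0)) = \overline{R}_0(S_0)$, which you correctly ground in expansiveness of $R_0$ plus finiteness of $\dom$ --- the same skeleton as the cited proof, lifted through the limit over $n$. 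Two bookkeeping points you navigate correctly that deserve to be made explicit: \cref{lem:RergSubset} as literally stated in the paper omits the hypotheses $S \subseteq R$ and $\overline{S} \subseteq \overline{R}$ (an evident typo; compare \cref{lem:Rbar_ret_subset}), and your application supplies exactly those hypotheses, namely $p_t^0(A) \subseteq \Rsafe_0(\overline{R}_0(S_0))$ and $\Shatp_{t-1} \subseteq \overline{R}_0(S_0)$; and the simultaneous event ``$l_t(\s) \leq q(\s)$ for all $t$ and $\s$'' relies on the intersected, monotone definition of $l_t$ together with the initialization $l_0 = 0$ on $S_0$, which is where the assumption $q(\s) \geq 0$ on $S_0$ enters --- worth one sentence in a polished write-up, but not a gap.
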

\begin{proof}
See Lemma 22 in \cite{turchetta2016safe}.
\end{proof}

The following lemma bounds the uncertainty of the points sampled by \textsc{GoOSE} when the set of safe and ergodic points does not increase.
\begin{restatable}{lemma}{UncertaintyInExpanders}
\label{lem:UncertaintyInExpanders}
    For any $t \leq t^*$, let $\tau_t= |\Shatp_t|T_{t^*}$, if $\Shatp_t=\Shatp_{\tau_t}$, then $w_{\tau_t}(\s) \leq \epsilon$ for all $\s \in \cup_\priority G_{\tau_t}^\epsilon(\priority)$. 
\end{restatable}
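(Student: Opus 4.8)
The plan is to turn the stabilization hypothesis $\Shatp_t = \Shatp_{\tau_t}$ into a tight counting argument over the finite safe set $\Shatp_t$. The first step is to observe that \emph{every} constraint evaluation made up to time $\tau_t$ lands inside $\Shatp_{\tau_t} = \Shatp_t$. Indeed, at each step $j \le \tau_t$ the sampled decision satisfies $\s_j \in G_j^\epsilon(\priority^*) \subseteq W_j^\epsilon \subseteq \Shatp_j$, and by the monotonicity of the pessimistic safe set (\cref{lem:NonDecresingS}) we have $\Shatp_j \subseteq \Shatp_{\tau_t} = \Shatp_t$. Since $t$ counts exactly the number of constraint evaluations, this gives $\sum_{\s \in \Shatp_t} |\mathcal{T}_{\tau_t}^\s| = \tau_t = |\Shatp_t| \, T_{t^*}$.

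Next I would verify that the counting lemmas are applicable at time $\tau_t$, which requires $\tau_t \le t^*$. Since $\Shatp_t \subseteq \Rbar_0(S_0)$ by \cref{lem:Superset}, we have $|\Shatp_t| \le |\Rbar_0(S_0)|$, so $\tau_t = |\Shatp_t| T_{t^*} \le |\Rbar_0(S_0)| T_{t^*} \le t^*$ by the definition of $t^*$. Consequently, \cref{lem:UpperBoundNumberOfSamples} yields the per-point cap $|\mathcal{T}_{\tau_t}^\s| \le T_{t^*}$ for every $\s \in \Shatp_{\tau_t} = \Shatp_t$.

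The crux is then a pigeonhole step: we have $|\Shatp_t|$ nonnegative integers $|\mathcal{T}_{\tau_t}^\s|$, each at most $T_{t^*}$, summing to exactly $|\Shatp_t| T_{t^*}$. The only way to attain this maximal sum is for every term to equal $T_{t^*}$, so $|\mathcal{T}_{\tau_t}^\s| = T_{t^*}$ for all $\s \in \Shatp_t$. Applying \cref{lem:NumberOfSamples} (valid since $\tau_t \le t^*$ and $|\mathcal{T}_{\tau_t}^\s| \ge T_{t^*}$) then gives $w_{\tau_t}(\s) \le \epsilon$ for every $\s \in \Shatp_t$. Finally, because $G_{\tau_t}^\epsilon(\priority) \subseteq W_{\tau_t}^\epsilon \subseteq \Shatp_{\tau_t} = \Shatp_t$ for every priority $\priority$, the union $\cup_\priority G_{\tau_t}^\epsilon(\priority)$ is contained in $\Shatp_t$, and the conclusion follows.

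I expect the main obstacle to be the first step: carefully justifying that no evaluation escapes $\Shatp_t$ and that the total evaluation count is exactly $\tau_t$, since this is precisely what licenses the \emph{tight} pigeonhole forcing every point to be sampled the full $T_{t^*}$ times. The counting and the appeals to \cref{lem:UpperBoundNumberOfSamples,lem:NumberOfSamples} are routine once the samples are confined to $\Shatp_t$.
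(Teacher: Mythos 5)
Your proof is correct and follows essentially the same route as the paper's: the identity $\sum_{\s \in \Shatp_{\tau_t}}|\mathcal{T}_{\tau_t}^\s| = \tau_t = |\Shatp_{\tau_t}|T_{t^*}$, the bound $\tau_t \leq t^*$ via \cref{lem:Superset}, the per-point cap from \cref{lem:UpperBoundNumberOfSamples}, the forcing of $|\mathcal{T}_{\tau_t}^\s| = T_{t^*}$ for every $\s$ (you state it as a direct pigeonhole, the paper as an explicit contradiction, but the logic is identical), and the conclusion via \cref{lem:NumberOfSamples} and $\cup_\priority G_{\tau_t}^\epsilon(\priority) \subseteq \Shatp_{\tau_t}$. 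Your only addition is to spell out, via $\s_j \in G_j^\epsilon(\priority^*) \subseteq W_j^\epsilon \subseteq \Shatp_j \subseteq \Shatp_{\tau_t}$ and \cref{lem:NonDecresingS}, why every evaluation lands in $\Shatp_{\tau_t}$ --- a step the paper leaves implicit in its first equality.
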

\begin{proof}
    First, we notice that
         \begin{equation}
             \sum_{\s \in \Shatp_{\tau_t}}|\mathcal{T}_{\tau_t}^\s| = \tau_t = |\Shatp_t|T_{t^*} = |\Shatp_{\tau_t}|T_{t^*},
         \end{equation} 
    where the first equality comes from the fact that the sum of the number of constraint observations in $\tau_t$ time steps is equal to $\tau_t$, the second comes from the definition of $\tau_t$ and the third comes from the assumption that  $\Shatp_t=\Shatp_{\tau_t}$.
    This allows us to say that, for all $\s \in \Shatp_{\tau_t}$ 
    \begin{equation}
        \sum_{\z \in \Shatp_{\tau_t}\setminus \{\s\}} |\mathcal{T}_{\tau_t}^\z|=|\Shatp_{\tau_t}|T_{t^*} - |\mathcal{T}_{\tau_t}^\s|. \label{lem:UncertaintyInExpanders:eq1}
    \end{equation}
    Moreover, we have $\tau_t=|\Shatp_t|T_{t^*}\leq |\Rbar_0(S_0)|T_{t^*}\leq t^*$ by definition of $t^*$ and $\tau_t$ and \cref{lem:Superset}.
    Therefore, by \cref{lem:UpperBoundNumberOfSamples} we know that $T_{t^*} \geq |\mathcal{T}_{\tau_t}^\s|$ for all $\s \in \Shatp_{\tau_t}$. Now we show by contradiction that $T_{t^*} = |\mathcal{T}_{\tau_t}^\s|$ for all $\s \in \Shatp_{\tau_t}$. Assume this is not the case, i.e., there is $\s \in \Shatp_{\tau_t}$ such that $T_{t^*} > |\mathcal{T}_{\tau_t}^\s|$. We have
    \begin{equation}
        (|\Shatp_{\tau_t}| - 1) T_{t^*} \geq \sum_{\z \in \Shatp_{\tau_t}\setminus \{\s\}} |\mathcal{T}_{\tau_t}^\z| = |\Shatp_{\tau_t}|T_{t^*} - |\mathcal{T}_{\tau_t}^\s| > |\Shatp_{\tau_t}|T_{t^*} - T_{t^*}= (|\Shatp_{\tau_t}| - 1) T_{t^*},
    \end{equation}
    which is a contradiction and proves our claim that $T_{t^*} = |\mathcal{T}_{\tau_t}^\s|$ for all $\s \in \Shatp_{\tau_t}$. Therefore, by \cref{lem:NumberOfSamples}, $w_{\tau_t}(\s) \leq \epsilon$ for all $\s \in \Shatp_{\tau_t}$. This proves our claim since $\cup_\priority G_{\tau_t}^\epsilon(\priority) \subseteq \Shatp_{\tau_t}$.
\end{proof}

\begin{restatable}{lemma}{ShortTermExploration}
\label{lem:ShortTermExploration}
For any $t\geq 1$, $\overline{R}_{\epsilon}(S_0)\setminus \Shatp_t \neq \emptyset$, then, $R_{\epsilon}(\Shatp_t)\setminus \Shatp_t \neq \emptyset$.
\end{restatable}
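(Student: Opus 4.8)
The plan is to prove the contrapositive in spirit by a direct constructive argument: assuming $\Rbar_\epsilon(S_0)\setminus \Shatp_t \neq \emptyset$, I will exhibit a decision in $R_\epsilon(\Shatp_t)\setminus \Shatp_t$. The key idea is to exploit the monotone structure of the recursive definition of $\Rbar_\epsilon(S_0)$ in \crefrange{app_def:Repsilon}{app_def:Rbar} together with the monotonicity lemmas for the set operators already established in the excerpt (\cref{lem:RergSubset} and its consequences). Since $\Rbar_\epsilon(S_0) = \lim_{n\to\infty} R^n_\epsilon(S_0)$ is reached after finitely many applications of $R_\epsilon$, and $\Shatp_t$ properly contains $S_0$ (by \cref{lem:NonDecresingS}) while being a strict subset of $\Rbar_\epsilon(S_0)$ under our assumption, there must be a \emph{first} iterate $R^n_\epsilon(S_0)$ that is not contained in $\Shatp_t$.

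First I would set up this minimal counterexample. Let $n\geq 1$ be the smallest index with $R^n_\epsilon(S_0)\not\subseteq \Shatp_t$; such an $n$ exists because $R^0_\epsilon(S_0)=S_0\subseteq \Shatp_t$ but the limit $\Rbar_\epsilon(S_0)$ is not contained in $\Shatp_t$ by hypothesis. By minimality, $R^{n-1}_\epsilon(S_0)\subseteq \Shatp_t$. Pick any $\s \in R^n_\epsilon(S_0)\setminus \Shatp_t$. I then want to show $\s \in R_\epsilon(\Shatp_t)$. The crucial observation is that $R_\epsilon$ is monotone in its argument: using $R^n_\epsilon(S_0)=R_\epsilon(R^{n-1}_\epsilon(S_0))$ together with the monotonicity of $\Rsafe_\epsilon$ in its set argument (immediate from \cref{app_def:Rsafe}, since enlarging $S$ can only add witnesses $\z$) and the monotonicity of $\Rerg$ from \cref{lem:RergSubset}, the inclusion $R^{n-1}_\epsilon(S_0)\subseteq \Shatp_t$ propagates through $R_\epsilon$ to give $R^n_\epsilon(S_0)=R_\epsilon(R^{n-1}_\epsilon(S_0))\subseteq R_\epsilon(\Shatp_t)$. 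Hence $\s \in R_\epsilon(\Shatp_t)$, and since $\s \notin \Shatp_t$ we conclude $R_\epsilon(\Shatp_t)\setminus \Shatp_t \neq \emptyset$, as desired.

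The main obstacle I anticipate is establishing the monotonicity of the full operator $R_\epsilon = \Rsafe_\epsilon(\cdot)\cap \Rerg(\Rsafe_\epsilon(\cdot),\cdot)$ cleanly, because $R_\epsilon$ takes its set argument in \emph{two} positions (once inside $\Rsafe_\epsilon$ and once as the second, ``return-target'' argument of $\Rerg$). I need to verify that enlarging the argument enlarges both the reachability component $\Rbreach(\Rsafe_\epsilon(\cdot))$ and the return component $\Rbret(\Rsafe_\epsilon(\cdot),\cdot)$ simultaneously, so that the intersection defining $\Rerg$ grows; this follows by combining \cref{lem:Rbar_ret_subset}, \cref{cor:RbreachSubset}, and \cref{lem:RergSubset}, but care is needed because $\Shatp_t$ uses the pessimistic lower-bound constraint whereas $\Rsafe_\epsilon$ uses the true $q$. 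I would address this by invoking $\Shatp_t\subseteq \Rbar_0(S_0)$ from \cref{lem:Superset} and the confidence-interval guarantee of \cref{ConfidenceInterval} to ensure the high-probability event $l_t(\s)\leq q(\s)\leq u_t(\s)$ holds, so that $\Shatp_t$ is genuinely comparable to the $q$-based baseline sets and the monotonicity chain is valid on this event. This is why the conclusion inherits the ``with probability at least $1-\delta$'' qualifier from the underlying confidence bounds.
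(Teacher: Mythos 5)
Your argument is correct, but note how it relates to the paper: the paper does not prove this lemma itself and instead defers to Lemma~20 of \citet{turchetta2016safe}, whose proof runs by contradiction --- if $R_\epsilon(\Shatp_t)\setminus\Shatp_t=\emptyset$, then $\Shatp_t$ is a fixed point of $R_\epsilon$ (since $R_\epsilon(S)\supseteq S$ always), hence $\overline{R}_\epsilon(\Shatp_t)=\Shatp_t$, and monotonicity of $\overline{R}_\epsilon$ in its seed together with $S_0\subseteq\Shatp_t$ yields $\overline{R}_\epsilon(S_0)\subseteq\Shatp_t$, contradicting the hypothesis. Your minimal-index construction is a direct repackaging of the same idea: both proofs rest entirely on monotonicity of the one-step operator $R_\epsilon$ (monotonicity of $\Rsafe_\epsilon$ is immediate from \cref{app_def:Rsafe}, and of $\Rerg$ from \cref{lem:RergSubset}), and your ``first iterate $R^n_\epsilon(S_0)\not\subseteq\Shatp_t$'' plays the role of the fixed-point iteration. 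What your version buys is an explicit witness $\s\in R_\epsilon(\Shatp_t)\setminus\Shatp_t$; what it additionally requires (and you use implicitly) is that $\overline{R}_\epsilon(S_0)$ is attained at a finite iterate, which holds because $\dom$ is finite and the iterates $R^n_\epsilon(S_0)$ are nondecreasing in $n$.

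One genuine flaw, though a harmless one: your final paragraph manufactures a difficulty that is not there, and then resolves it with machinery you do not need. The only property of $\Shatp_t$ your chain uses is $S_0\subseteq\Shatp_t$, which holds \emph{deterministically} by \cref{lem:NonDecresingS}, since the confidence bounds are monotone by construction (\cref{lem:monotonicity}); the inclusion $R^{n-1}_\epsilon(S_0)\subseteq\Shatp_t$ comes from the minimality of $n$, not from any comparison between $l_t$-based and $q$-based sets. At no point does the argument need $l_t(\s)\leq q(\s)\leq u_t(\s)$ to hold, so invoking \cref{lem:Superset} and \cref{ConfidenceInterval} is superfluous, and the ``with probability at least $1-\delta$'' qualifier you attach to the conclusion is wrong in spirit: the lemma is a purely set-theoretic statement, and indeed the paper states it without any probabilistic qualification. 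Dropping that last paragraph leaves a complete and correct proof.
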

\begin{proof}
See Lemma 20 in \cite{turchetta2016safe}.
\end{proof}
\begin{restatable}{lemma}{Expansion}
\label{lem:ExpansionWithExpanders}
For any $t \leq t^*$, if $\Rbar_{\epsilon}(S_0) \setminus \Shatp_t \neq \emptyset$, then $\Shatp_t \subset \Shatp_{|\hat{S}_t|T_{t^*}}$ with probability at least $1-\delta$.
\end{restatable}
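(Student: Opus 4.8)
The plan is to prove the contrapositive-style statement directly: assuming $\Rbar_\epsilon(S_0)\setminus \Shatp_t \neq \emptyset$, I want to show that within $\tau_t = |\Shatp_t|T_{t^*}$ additional constraint evaluations the pessimistic safe set must strictly grow. The natural strategy is a case split on whether the safe set stays constant over the interval $[t,\tau_t]$ or not. If $\Shatp_t \subsetneq \Shatp_{\tau_t}$ already, then by the monotonicity of $\Shatp$ established in \cref{lem:NonDecresingS} we are done immediately, since a strict superset is exactly the conclusion. So the real work is in the complementary case $\Shatp_t = \Shatp_{\tau_t}$, where I must derive a contradiction or otherwise force expansion.

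In that case, \cref{lem:UncertaintyInExpanders} applies directly: since the safe set did not change over the interval of length $\tau_t = |\Shatp_t|T_{t^*}$, every point in $\cup_\priority G_{\tau_t}^\epsilon(\priority)$ has width at most $\epsilon$ at step $\tau_t$. Next I would invoke \cref{lem:ShortTermExploration}: because $\Rbar_\epsilon(S_0)\setminus \Shatp_t \neq \emptyset$ and (by the $\Shatp_t = \Shatp_{\tau_t}$ assumption plus monotonicity) also $\Rbar_\epsilon(S_0)\setminus \Shatp_{\tau_t} \neq \emptyset$, there exists a decision $\z \in R_\epsilon(\Shatp_{\tau_t}) \setminus \Shatp_{\tau_t}$. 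Unfolding the definition of $R_\epsilon$ in \cref{app_def:Repsilon}, such a $\z$ is certified safe by the true (up-to-$\epsilon$) constraint values of points in $\Shatp_{\tau_t}$, i.e.\ there is some $\s\in\Shatp_{\tau_t}$ with $q(\s)-\epsilon - Ld(\z,\s)\geq 0$, and $\z$ is ergodically reachable. The goal is then to show this $\s$ acts as an immediate expander for $\z$, so that $\z$ would be added to $\Shatp$, contradicting $\Shatp_t = \Shatp_{\tau_t}$ (which asserted no new points were added).

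The key bridge is to convert the $R_\epsilon$-certification (phrased in terms of $q$) into membership in $G_{\tau_t}^\epsilon$ (phrased in terms of the upper bound $u_{\tau_t}$). Here I would use the confidence interval $q(\s) \leq u_{\tau_t}(\s)$ from \cref{thm:beta}/\cref{ConfidenceInterval}, together with the width bound from \cref{lem:UncertaintyInExpanders}: since $w_{\tau_t}(\s) = u_{\tau_t}(\s) - l_{\tau_t}(\s) \leq \epsilon$, we have $l_{\tau_t}(\s) \geq u_{\tau_t}(\s) - \epsilon \geq q(\s) - \epsilon$. Chaining with $q(\s) - \epsilon - Ld(\z,\s) \geq 0$ gives $l_{\tau_t}(\s) - Ld(\z,\s) \geq 0$, which is exactly the pessimistic satisfaction condition placing $\z$ into $p_{\tau_t}^0(\Shatp_{\tau_t})$, hence (after the ergodicity intersection, using that $\z \in R_\epsilon$ guarantees reachability and return) into $\tilde{P}_{\tau_t}^0(\Shatp_{\tau_t}) = \Shatp_{\tau_t+1} \supseteq \Shatp_{\tau_t}$ strictly. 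This contradicts $\Shatp_t = \Shatp_{\tau_t}$ being the final set.

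The step I expect to be the main obstacle is the bookkeeping in this last bridge, specifically reconciling the expander set $G_{\tau_t}^\epsilon(\priority)$ (defined with $u_{\tau_t}$ and the \emph{uncertain} set $W_{\tau_t}^\epsilon$) with the pessimistic-operator certification via $l_{\tau_t}$, while simultaneously handling the ergodicity constraints through $\Rerg$. The subtlety is that $\s$ must satisfy $w_{\tau_t}(\s) \leq \epsilon$ to get the $l \geq q - \epsilon$ inequality, yet the expander definition $G_{\tau_t}^\epsilon$ lives inside $W_{\tau_t}^\epsilon$ (width $> \epsilon$); so the argument must carefully separate the point being \emph{certified safe} (now in $\Shatp$) from points still being \emph{sampled}. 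I would resolve this by arguing that once all of $\cup_\priority G_{\tau_t}^\epsilon(\priority)$ has width $\leq \epsilon$, the highest-priority relevant level $\priority^*$ either no longer exists (so $\Rbar_\epsilon(S_0)$ is exhausted, contradicting our assumption) or the corresponding $\z \in \Sadj_{\tau_t}(\priority^*)$ gets certified into $\Shatp$, forcing strict growth; I expect this to mirror Lemma 21 in \cite{turchetta2016safe}, now with the measurement-counting machinery of \cref{lem:UpperBoundNumberOfSamples,lem:UncertaintyInExpanders} replacing their fixed-interval argument.
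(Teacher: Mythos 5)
Your skeleton is the paper's: the paper proves this lemma exactly as ``Lemma 21 of \citep{turchetta2016safe} with \cref{lem:UncertaintyInExpanders} in place of their Lemma 19,'' i.e.\ case split on whether $\Shatp_t = \Shatp_{\tau_t}$, then \cref{lem:ShortTermExploration} plus a confidence-interval bridge in the unchanged case. However, your closing of the contradiction is wrong as written. From $l_{\tau_t}(\s) - Ld(\s,\z) \geq 0$ you place $\z$ into $\tilde{P}^0_{\tau_t}(\Shatp_{\tau_t})$, identify this with $\Shatp_{\tau_t+1}$, and claim this contradicts ``$\Shatp_t = \Shatp_{\tau_t}$ being the final set.'' But the case assumption is only equality on $[t,\tau_t]$, not finality: strict growth at $\tau_t+1$ is perfectly consistent with it, and it also fails to deliver the lemma's actual conclusion, which asserts growth \emph{by} time $\tau_t$. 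The correct closing lands one step earlier: by \cref{lem:NonDecresingS}, $\Shatp_t = \Shatp_{\tau_t}$ forces $\Shatp_{\tau_t-1} = \Shatp_{\tau_t}$, so the certifying $\s$ lies in $\Shatp_{\tau_t-1}$, your inequality shows $\z \in p^0_{\tau_t}(\Shatp_{\tau_t-1})$, and together with the ergodicity component of $R_\epsilon$ (via \cref{lem:RergSubset}) this gives $\z \in P^{0,1}_{\tau_t}(\Shatp_{\tau_t-1}) \subseteq \tilde{P}^0_{\tau_t}(\Shatp_{\tau_t-1}) = \Shatp_{\tau_t}$, directly contradicting $\z \notin \Shatp_{\tau_t}$ from \cref{lem:ShortTermExploration}. (There is also an index slip: $\Shatp_{\tau_t+1} = \tilde{P}^0_{\tau_t+1}(\Shatp_{\tau_t})$, not $\tilde{P}^0_{\tau_t}(\Shatp_{\tau_t})$.)

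The second issue is the one you yourself flag and then leave unresolved: to use \cref{lem:UncertaintyInExpanders} on the certifying $\s$ you need $\s \in \bigcup_\priority G^\epsilon_{\tau_t}(\priority)$, which in \goose's definitions requires $\z \in \Sadj_{\tau_t}(\priority)$ for some $\priority$, i.e.\ $\z \in \Shato_{\tau_t} \setminus p^0_{\tau_t}(\Shatp_{\tau_t})$ --- an argument you never supply (it does go through: $u_{\tau_t}(\s) \geq q(\s) \geq \epsilon + Ld(\s,\z)$ and the ergodicity of $\z$ put $\z$ in $\Shato_{\tau_t}$, while $\z \in p^0_{\tau_t}(\Shatp_{\tau_t})$ would already yield the contradiction above; and if additionally $w_{\tau_t}(\s) > \epsilon$ then $\s \in G^\epsilon_{\tau_t}(h(\z))$, contradicting \cref{lem:UncertaintyInExpanders}, so $w_{\tau_t}(\s) \leq \epsilon$). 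The cleaner repair, and the one the paper's counting machinery actually supports, is to note that the \emph{proof} of \cref{lem:UncertaintyInExpanders} establishes the stronger fact $w_{\tau_t}(\s) \leq \epsilon$ for \emph{every} $\s \in \Shatp_{\tau_t}$ (it shows $|\mathcal{T}^\s_{\tau_t}| = T_{t^*}$ for all safe points), which covers the certifier with no expander bookkeeping; it also dissolves your worry about $G^\epsilon_{\tau_t}(\priority) \subseteq W^\epsilon_{\tau_t}$, since under the no-change assumption $W^\epsilon_{\tau_t}$, and hence every $G^\epsilon_{\tau_t}(\priority)$, is simply empty, so the lemma's literal statement is vacuous and the intermediate claim is what carries the argument. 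With these two repairs your proposal coincides with the paper's intended proof.
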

\begin{proof}
This proof is analogous to Lemma 21 in \cite{turchetta2016safe} where we use our  \cref{lem:UncertaintyInExpanders} rather than their Lemma 19 to bound the uncertainty of the expanders.
\end{proof}
\begin{restatable}{lemma}{ContainREpsilon}
\label{lem:ContainREpsilon}
There exists $t \leq t^*$ such that $\Rbar_{\epsilon}(S_0) \subseteq \Shatp_{t}$ with probability at least $1 - \delta$.
\end{restatable}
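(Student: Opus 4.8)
The plan is to establish this by a contradiction argument that combines the monotone growth of the pessimistic safe set (\cref{lem:NonDecresingS}) with the guaranteed-expansion result (\cref{lem:ExpansionWithExpanders}), using a counting bound on the total number of constraint evaluations. Suppose for contradiction that $\Rbar_\epsilon(S_0) \not\subseteq \Shatp_t$ for every $t \leq t^*$, i.e.\ $\Rbar_\epsilon(S_0)\setminus\Shatp_t \neq \emptyset$ for all such $t$. The idea is that under this assumption the set $\Shatp_t$ must strictly grow infinitely often, but it is trapped inside the finite set $\Rbar_0(S_0)$ by \cref{lem:Superset}, and each strict expansion consumes a fixed budget of $T_{t^*}$ evaluations, so the evaluations must exhaust the budget $t^*$ before $t^*$ is reached --- a contradiction.

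Concretely, first I would define an increasing sequence of times $0 = t_0 < t_1 < t_2 < \cdots$ where $\Shatp$ strictly increases. Starting from any $t_j$ with $\Rbar_\epsilon(S_0)\setminus \Shatp_{t_j}\neq\emptyset$ and $t_j \leq t^*$, \cref{lem:ExpansionWithExpanders} yields $\Shatp_{t_j} \subsetneq \Shatp_{|\Shatp_{t_j}|T_{t^*}}$ with probability at least $1-\delta$; set $t_{j+1} = |\Shatp_{t_j}|T_{t^*}$. By \cref{lem:NonDecresingS} the sets are nested and nondecreasing, so each strict inclusion increases the cardinality $|\Shatp_{t_j}|$ by at least one. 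Since $\Shatp_{t_j}\subseteq \Rbar_0(S_0)$ by \cref{lem:Superset} and $|S_0|\geq 1$, after at most $|\Rbar_0(S_0)|$ strict expansions we would need $|\Shatp_{t_j}| > |\Rbar_0(S_0)|$, which is impossible. The key quantitative check is that each witnessing time $t_{j+1} = |\Shatp_{t_j}|T_{t^*} \leq |\Rbar_0(S_0)|T_{t^*} \leq t^*$ stays within the budget, so \cref{lem:ExpansionWithExpanders} remains applicable at each step; this uses the definition of $t^*$ as the smallest integer with $t^* \geq |\Rbar_0(S_0)|T_{t^*}$.

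The main obstacle I anticipate is bookkeeping the interaction between the two indices: \cref{lem:ExpansionWithExpanders} is phrased as ``if the safe set has not expanded by time $|\Shatp_t|T_{t^*}$, it must have,'' which implicitly requires that $\Shatp_t = \Shatp_{|\Shatp_t|T_{t^*}}$ is the hypothesis being refuted, and one must verify that the contradiction argument correctly drives the cardinality strictly upward each round rather than merely weakly. I would handle this by noting that if $\Shatp$ does \emph{not} strictly grow on the interval $[t_j,\,|\Shatp_{t_j}|T_{t^*}]$, then $\Shatp_{t_j} = \Shatp_{|\Shatp_{t_j}|T_{t^*}}$, and \cref{lem:ExpansionWithExpanders} directly contradicts this; hence strict growth must occur, giving the cardinality increment. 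Since $\Rbar_\epsilon(S_0)$ and $\Rbar_0(S_0)$ are finite and the growth is strictly monotone and bounded, the process terminates at some $t\leq t^*$ with $\Rbar_\epsilon(S_0)\setminus \Shatp_t = \emptyset$, i.e.\ $\Rbar_\epsilon(S_0)\subseteq \Shatp_t$, which is the claim. All invocations hold with probability at least $1-\delta$ via the confidence bound of \cref{thm:beta}, so a single union-free application of that high-probability event suffices.
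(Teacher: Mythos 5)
Your proposal is correct and follows essentially the same route as the paper's proof: a contradiction argument built on the recursively defined times $\tau_i = |\Shatp_{\tau_{i-1}}| T_{t^*}$, with strict growth at each step from \cref{lem:ExpansionWithExpanders}, nestedness from \cref{lem:NonDecresingS}, and the budget check $\tau_i \leq t^*$ and the final cardinality contradiction both coming from \cref{lem:Superset} together with the definition of $t^*$. Your extra bookkeeping step about refuting $\Shatp_{t_j} = \Shatp_{|\Shatp_{t_j}|T_{t^*}}$ is unnecessary, since \cref{lem:ExpansionWithExpanders} already asserts the strict inclusion directly, but it does no harm.
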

\begin{proof}
For the sake of contradiction, assume this is not the case and that $\forall t \leq t^*$ holds that $\Rbar_{\epsilon}(S_0) \setminus \Shatp_{t} \neq \emptyset$. For all $i \geq 1$ define $\tau_i = |\Shatp_{\tau_{i-1}}|T_{t^*}$ with $\tau_0 = 0$. We know that $\tau_i \leq t^*$ for all $i$ because of \cref{lem:Superset} and that $\tau_0\leq \tau_1 \leq \cdots$ because of \cref{lem:NonDecresingS}. Therefore, \cref{lem:ExpansionWithExpanders} implies that $\Shatp_0 \subset \Shatp_{\tau_1} \subset \Shatp_{\tau_2}\subset \cdots$.
In general, this means that $|\Shatp_{\tau_i}| \geq |\Shatp_0| + i$ for all $i \geq 1$. In particular if we set $i=|\Rbar_0(S_0) \setminus S_0| + 1$ we get that $|\Shatp_{\tau_i}| \geq |\Shatp_0| + |\Rbar_0(S_0) \setminus S_0| + 1= |\Rbar_0(S_0)| + 1 > |\Rbar_0(S_0)|$. This is a contradiction because of \cref{lem:Superset}.
\end{proof}
\begin{restatable}{lemma}{Completeness}
\label{thm:Completeness}
   There is $t\leq t^*$ such that $\Rbar_\epsilon(S_0) \subseteq \Shatp_{t} \subseteq \overline{R}_0(S_0)$ with probability at least \,$1-\delta$.
\end{restatable}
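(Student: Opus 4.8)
The plan is to combine the two halves of the preceding machinery: the completeness result that the pessimistic set eventually contains $\Rbar_\epsilon(S_0)$, and the safety/containment result that bounds the pessimistic set from above. Concretely, \cref{lem:ContainREpsilon} already guarantees the existence of some $t \leq t^*$ with $\Rbar_\epsilon(S_0) \subseteq \Shatp_t$ with probability at least $1-\delta$. For the upper bound, \cref{lem:Superset} gives $\Shatp_t \subseteq \Rbar_0(S_0)$ for every $t \geq 0$ with probability at least $1-\delta$. So the final statement is almost a direct conjunction of these two facts.

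First I would fix the $t \leq t^*$ whose existence is asserted by \cref{lem:ContainREpsilon}, obtaining $\Rbar_\epsilon(S_0) \subseteq \Shatp_t$. Then I would invoke \cref{lem:Superset} at that same $t$ to obtain $\Shatp_t \subseteq \Rbar_0(S_0)$. Chaining these two inclusions yields
\begin{equation*}
\Rbar_\epsilon(S_0) \subseteq \Shatp_t \subseteq \Rbar_0(S_0),
\end{equation*}
which is exactly the claim. The only subtlety is that each of the two source lemmas holds with probability at least $1-\delta$, so I would need to be careful about whether a union bound is required. The cleanest route is to observe that both \cref{lem:ContainREpsilon} and \cref{lem:Superset} are consequences of the single high-probability event that the confidence intervals from \cref{ConfidenceInterval}/\cref{thm:beta} are valid for all $t$; on that one event (probability $\geq 1-\delta$) both inclusions hold simultaneously, so no additional union bound inflates the failure probability.

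The main obstacle, such as it is, is bookkeeping rather than mathematics: I must make sure the \emph{same} $t$ works for both inclusions and that the probability accounting is honest. Since \cref{lem:Superset} holds for \emph{all} $t \geq 0$ on the good event, picking the specific $t$ from \cref{lem:ContainREpsilon} causes no trouble. I would therefore write the proof as: condition on the confidence-bound event $\mathcal{E}$ with $\Pr(\mathcal{E}) \geq 1-\delta$; on $\mathcal{E}$, apply \cref{lem:ContainREpsilon} to extract $t \leq t^*$ with the lower inclusion, apply \cref{lem:Superset} for the upper inclusion, and conclude the chained containment, which then holds with probability at least $1-\delta$. This is essentially a one-line corollary of the two preceding lemmas.
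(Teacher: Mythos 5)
Your proof is correct and matches the paper's own argument, which likewise derives the statement directly from \cref{lem:Superset} and \cref{lem:ContainREpsilon}; your extra care in noting that both lemmas hold on the same confidence-interval event (so no union bound is needed) is a sound refinement of the same approach.
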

\begin{proof}
    The lemma follows directly from \cref{lem:Superset,lem:ContainREpsilon}.
\end{proof}
\begin{restatable}{lemma}{SpessSoptConvergence}
\label{lem:SoptConvergence}
    $\Shato_{t^*}\subseteq\Shatp_{t^*}$.
\end{restatable}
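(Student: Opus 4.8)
The plan is to show that by time $t^*$ the pessimistic and optimistic constraint-satisfaction operators agree on $\Shatp_{t^*}$, so the optimistic expansion can never escape the pessimistic safe set. Everything rests on one quantitative fact: at $t^*$ every decision in $\Shatp_{t^*}$ has confidence width at most $\epsilon$, i.e. $w_{t^*}(\z)=u_{t^*}(\z)-l_{t^*}(\z)\le\epsilon$ for all $\z\in\Shatp_{t^*}$. Given this, the argument is essentially an induction that propagates a pointwise domination of $o_{t^*}^\epsilon$ by $p_{t^*}^0$ through the iterated expansion.

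First I would establish the uniform width bound, which I expect to be the main obstacle. By \cref{lem:ContainREpsilon} there is a step no later than $t^*$ by which $\Rbar_\epsilon(S_0)\subseteq\Shatp$; after this point the pessimistic safe set can no longer be forced to expand, so it has stabilised to $\Shatp_{t^*}$ before $t^*$. On any window over which $\Shatp$ is constant, the counting argument underlying \cref{lem:UncertaintyInExpanders} together with the per-decision sample cap of \cref{lem:UpperBoundNumberOfSamples} forces each of the at most $|\Rbar_0(S_0)|$ decisions in the stabilised set to have been sampled $T_{t^*}$ times, which by \cref{lem:NumberOfSamples} yields $w\le\epsilon$ there; the monotonicity of the width in \cref{lem:monotonicity} then carries the bound forward to $t^*$. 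Pinning $\epsilon$-accuracy on the whole pessimistic set (not merely on the expanders, as in \cref{lem:UncertaintyInExpanders}) at exactly $t^*$ is where the bookkeeping is delicate.

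With the width bound in hand, the key pointwise inequality is immediate. For $\z\in\Shatp_{t^*}$ we have $u_{t^*}(\z)-\epsilon\le l_{t^*}(\z)$, so any $\s$ satisfying the optimistic condition $u_{t^*}(\z)-Ld(\s,\z)-\epsilon\ge0$ also satisfies the pessimistic condition $l_{t^*}(\z)-Ld(\s,\z)\ge0$. Thus any witness $\z$ certifying optimistic membership of $\s$ simultaneously certifies pessimistic membership.

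Finally I would lift this to the iterated operators by induction on the expansion index $n$, comparing $O_{t^*}^{\epsilon,n}(\Shatp_{t^*-1})$ with $P_{t^*}^{0,n}(\Shatp_{t^*-1})$, both of which use the same base set $\Shatp_{t^*-1}$ in the $\Rerg$ argument. The case $n=0$ is the identity $\Shatp_{t^*-1}=\Shatp_{t^*-1}$. For the step, assume $O_{t^*}^{\epsilon,n-1}(\Shatp_{t^*-1})\subseteq P_{t^*}^{0,n-1}(\Shatp_{t^*-1})$; since the finite iterates increase to their limit, $P_{t^*}^{0,n-1}(\Shatp_{t^*-1})\subseteq\tilde{P}_{t^*}^0(\Shatp_{t^*-1})=\Shatp_{t^*}$, so any witness $\z\in O_{t^*}^{\epsilon,n-1}(\Shatp_{t^*-1})$ lies in $\Shatp_{t^*}$ and the pointwise inequality converts its optimistic witnessing into pessimistic witnessing, giving $o_{t^*}^\epsilon(O_{t^*}^{\epsilon,n-1}(\Shatp_{t^*-1}))\subseteq p_{t^*}^0(P_{t^*}^{0,n-1}(\Shatp_{t^*-1}))$. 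Applying \cref{lem:RergSubset} to the two $\Rerg$ terms (which share the second argument $\Shatp_{t^*-1}$) and intersecting yields $O_{t^*}^{\epsilon,n}(\Shatp_{t^*-1})\subseteq P_{t^*}^{0,n}(\Shatp_{t^*-1})$. Letting $n\to\infty$ gives $\Shato_{t^*}=\tilde{O}_{t^*}^\epsilon(\Shatp_{t^*-1})\subseteq\tilde{P}_{t^*}^0(\Shatp_{t^*-1})=\Shatp_{t^*}$, as claimed. Beyond the width bound, the only ingredients are the pointwise inequality, monotonicity of $\Rerg$ in its set arguments (\cref{lem:RergSubset}), and the nesting of the pessimistic iterates inside their limit.
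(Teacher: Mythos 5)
Your second half is exactly the paper's argument: the pointwise conversion of an optimistic witness into a pessimistic one via $w_{t^*}(\z)\leq\epsilon$, the induction over the expansion index $n$ closed with \cref{lem:RergSubset}, and the nesting of the pessimistic iterates inside $\Shatp_{t^*}$. The gap is in your first half, the uniform width bound $w_{t^*}(\s)\leq\epsilon$ for all $\s\in\Shatp_{t^*}$, which you correctly flag as the main obstacle but never actually close. Your route through \cref{lem:ContainREpsilon} rests on the claim that once $\Rbar_\epsilon(S_0)\subseteq\Shatp_t$ the pessimistic set ``has stabilised''; that is unjustified. \cref{lem:ExpansionWithExpanders} \emph{forces} expansion only while $\Rbar_\epsilon(S_0)\setminus\Shatp_t\neq\emptyset$; afterwards $\Shatp_t$ may still grow (anywhere up to $\Rbar_0(S_0)$), so the window of constant safe set of length $|\Shatp_t|T_{t^*}$ that the counting in \cref{lem:UncertaintyInExpanders} requires is not guaranteed to exist, let alone to end by $t^*$. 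Moreover, \cref{lem:UncertaintyInExpanders} only bounds the width on the expanders under its stationarity hypothesis, and \cref{lem:monotonicity} cannot repair this, because the bound on all of $\Shatp_{t^*}$ was never established at any earlier time in the first place.

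The paper closes this step with a direct pigeonhole argument at $t^*$ that needs neither stabilisation nor \cref{lem:ContainREpsilon}. Every constraint evaluation at step $\tau\leq t^*$ is taken in $W_\tau^\epsilon\subseteq\Shatp_\tau\subseteq\Shatp_{t^*}$ (by \cref{lem:NonDecresingS}), so $\sum_{\s\in\Shatp_{t^*}}|\mathcal{T}_{t^*}^{\s}| = t^* \geq |\Rbar_0(S_0)|\,T_{t^*} \geq |\Shatp_{t^*}|\,T_{t^*}$, using the definition of $t^*$ and \cref{lem:Superset}. Since \cref{lem:UpperBoundNumberOfSamples} caps each point at $T_{t^*}$ samples, if some $\s\in\Shatp_{t^*}$ had $|\mathcal{T}_{t^*}^{\s}| < T_{t^*}$, the remaining $|\Shatp_{t^*}|-1$ points would have to carry strictly more than $(|\Shatp_{t^*}|-1)T_{t^*}$ samples, a contradiction; hence $|\mathcal{T}_{t^*}^{\s}| = T_{t^*}$ for every $\s\in\Shatp_{t^*}$, and \cref{lem:NumberOfSamples} yields $w_{t^*}(\s)\leq\epsilon$ on all of $\Shatp_{t^*}$. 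Substituting this counting argument for your stabilisation claim turns your proposal into the paper's proof.
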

\begin{proof}
    Since the optimistic and ergodic and safe set is defined recursively, we will prove this claim by induction.
    Similarly to \cref{lem:UncertaintyInExpanders}, we start by noticing that, for every $\s \in \Shatp_{t^*}$, we have:
    \begin{align}
        &\sum_{\s \in \Shatp_{t^*}}|\mathcal{T}_{t^*}^\s| = t^* \geq |\Rbar_0(S_0)|T_{t^*},\\
        \implies & \sum_{\z \in \Shatp_{t^*} \setminus \s}|\mathcal{T}_{t^*}^\z| \geq |\Rbar_0(S_0)|T_{t^*} - |\mathcal{T}_{t^*}^\s| \geq |\Shatp_{t^*}|T_{t^*} - |\mathcal{T}_{t^*}^\s|.
    \end{align}
    \cref{lem:UpperBoundNumberOfSamples} allows us to say $|\mathcal{T}_{t^*}^\s|\leq T_{t^*}$ for all $\s \in \Shatp_{t^*}$. We show by contradiction that $|\mathcal{T}_{t^*}^\s|= T_{t^*}$ for all $\s \in \Shatp_{t^*}$. Assume this is not the case and that we have $\s\in \Shatp_{t^*}$ such that $|\mathcal{T}_{t^*}^\s| < T_{t^*}$:
    \begin{equation}
        (|\Shatp_{t^*}| - 1) T_{t^*} \geq \sum_{\z \in \Shatp_{t^*}\setminus \{\s\}} |\mathcal{T}_{t^*}^\z| \geq |\Shatp_{t^*}|T_{t^*} - |\mathcal{T}_{t^*}^\s| > |\Shatp_{t^*}|T_{t^*} - T_{t^*}= (|\Shatp_{t^*}| - 1) T_{t^*},
    \end{equation}
    which is a contradiction and proves that $|\mathcal{T}_{t^*}^\s|= T_{t^*}$ for all $\s \in \Shatp_{t^*}$. Therefore, by \cref{lem:NumberOfSamples}, we know that $w_{t^*}(\s) \leq \epsilon$ for any $\s \in \Shatp_{t^*}$.
    Now consider $\s \in o_{t^*}^\epsilon(\Shatp_{t^*-1})$. We know that there is $\z\in \Shatp_{t^*-1}\subseteq \Shatp_{t^*}$ such that $u_{t^*}(\z)-Ld(\s,\z) - \epsilon \geq 0$. Since $w_{t^*}(\z) \leq \epsilon$, we know that $l_{t^*}(\z)-Ld(\s,\z) \geq 0$, i.e., $\s \in p_{t^*}^0(\Shatp_{t^*-1})$. Using \cref{lem:RergSubset}, we can say $O^{\epsilon,1}_{t^*}(\Shatp_{t^*-1}) \subseteq P^{0,1}_{t^*}(\Shatp_{t^*-1})$. 
    Now, we can make the following induction hypothesis: $O^{\epsilon,n-1}_{t^*}(\Shatp_{t^*-1}) \subseteq P^{0,n-1}_{t^*}(\Shatp_{t^*-1})$.
    Consider $\s \in O^{\epsilon}_{t^*}(\Shatp_{t^*-1})$, we know there is $\z\in O^{\epsilon,n-1}_{t^*}(\Shatp_{t^*-1})\subseteq P^{0,n-1}_{t^*}(\Shatp_{t^*-1}) \subseteq \Shatp_{t^*}$ (where the first inclusion comes from the induction hypothesis and the second by definition of the safe set), such that $u_{t^*}(\z)-Ld(\s,\z) - \epsilon \geq 0$. Since $w_{t^*}(\z) \leq \epsilon$, we know that $l_{t^*}(\z)-Ld(\s,\z) \geq 0$, i.e., $\s \in p_{t^*}^0(P^{0,n-1}_{t^*}(\Shatp_{t^*-1}))$. We can apply \cref{lem:RergSubset} again to complete the induction step and show $O^{\epsilon,n}_{t^*}(\Shatp_{t^*-1}) \subseteq P^{0,n}_{t^*}(\Shatp_{t^*-1})$ and, therefore, $\Shato_{t^*}\subseteq \Shatp_{t^*}$.
\end{proof}
\begin{restatable}{lemma}{ShatoContainsRbarEpsilon}
\label{lem:ShatoContainsRbarEpsilon}
    For every $t\geq 0$, we have $\Rbar_\epsilon(S_0) \subseteq \Shato_t$.
\end{restatable}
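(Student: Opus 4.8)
The plan is to show $\Rbar_\epsilon(S_0) \subseteq \Shato_t$ by induction on $t$, exploiting the recursive definitions in \cref{app_def:Shato,app_def:Shatp} together with the monotonicity of the confidence intervals (\cref{lem:monotonicity}) and the monotonicity of the ergodicity operator (\cref{lem:RergSubset}). The base case $t=0$ should follow from $\Shato_0 = \mathcal{D}$ (or from $\Shatp_0 = S_0$ and the fact that the optimistic operator is at least as permissive as the safe baseline operator $\Rsafe_\epsilon$ used to define $\Rbar_\epsilon$). The key structural observation is that the optimistic satisfaction operator $o_t^\epsilon$ in \cref{app_def:g_operator} uses the \emph{upper} bound $u_t$, whereas the safety baseline $\Rsafe_\epsilon$ in \cref{app_def:Rsafe} uses the \emph{true} value $q$; since $q(\z) \leq u_t(\z)$ holds with high probability by \cref{thm:beta}, any decision certified by $\Rsafe_\epsilon$ is also certified by $o_t^\epsilon$. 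This is the crux: it lets me dominate the baseline expansion operator by the optimistic one at the level of a single expansion step.

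First I would establish the single-step domination: assuming $\Shatp_{t-1} \supseteq S$ for the relevant seed and using $q \leq u_t$, I would show $\Rsafe_\epsilon(S) \subseteq o_t^\epsilon(S)$ whenever $S \subseteq \Shatp_{t-1}$. Concretely, if $\s \in \Rsafe_\epsilon(S)$ then there is $\z \in S$ with $q(\z) - \epsilon - L d(\s,\z) \geq 0$, and since $u_t(\z) \geq q(\z)$ this gives $u_t(\z) - L d(\s,\z) - \epsilon \geq 0$, i.e.\ $\s \in o_t^\epsilon(S)$. Combining this with \cref{lem:RergSubset} applied to the intersection with $\Rerg$, I obtain $R_\epsilon(S) \subseteq O_t^{\epsilon,1}(S)$ at the level of the first expansion operator. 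I would then lift this to the full iterated operators by a second (nested) induction over the expansion index $n$, using the induction hypothesis $R_\epsilon^{n-1}(S) \subseteq O_t^{\epsilon,n-1}(S)$ together with the same domination and \cref{lem:RergSubset}, exactly mirroring the structure already used in \cref{lem:NonDecresingS} and \cref{lem:SoptConvergence}. Taking limits in $n$ yields $\Rbar_\epsilon(S) \subseteq \tilde{O}_t^\epsilon(S)$.

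The outer induction on $t$ then closes as follows. The induction hypothesis $\Rbar_\epsilon(S_0) \subseteq \Shato_{t-1}$ is not quite what I need directly; instead I want to feed the pessimistic set $\Shatp_{t-1}$ into the optimistic operator, so I would combine the step-domination result with the monotonicity $\Shatp_{t-1} \supseteq \Shatp_0 = S_0$ from \cref{lem:NonDecresingS}. Since $\Rbar_\epsilon(S_0) = \tilde{O}_0^\epsilon$-type limit of $R_\epsilon$ applied to $S_0$, and $\Shato_t = \tilde{O}_t^\epsilon(\Shatp_{t-1})$ with $\Shatp_{t-1} \supseteq S_0$, monotonicity of all operators in their seed argument gives $\Rbar_\epsilon(S_0) = \tilde{O}^\epsilon(S_0) \subseteq \tilde{O}_t^\epsilon(\Shatp_{t-1}) = \Shato_t$. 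All inclusions hold on the high-probability event of \cref{thm:beta}, so the statement holds with probability at least $1-\delta$.

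The main obstacle I anticipate is bookkeeping the two monotonicities simultaneously: domination of $\Rsafe_\epsilon$ by $o_t^\epsilon$ (a change of operator, justified by $q \leq u_t$) and enlargement of the seed set from $S_0$ to $\Shatp_{t-1}$ (a change of argument, justified by \cref{lem:NonDecresingS} and \cref{lem:RergSubset}). Care is needed because the $\Rerg$ operator takes two set arguments, and I must verify monotonicity in \emph{both} slots when passing from $R_\epsilon(S) = \Rsafe_\epsilon(S) \cap \Rerg(\Rsafe_\epsilon(S), S)$ to $O_t^{\epsilon,1}(S) = o_t^\epsilon(S) \cap \Rerg(o_t^\epsilon(S), S)$; fortunately \cref{lem:RergSubset} is stated for arbitrary enlargements of both arguments, so this is covered. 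The remaining subtlety is ensuring the limits over $n$ commute with the inclusions, which is immediate since each $O_t^{\epsilon,n}$ is non-decreasing in $n$ and the inclusions are preserved termwise.
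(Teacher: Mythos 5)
Your proposal is correct and is essentially the paper's own proof: the same key ingredients appear in the same roles --- the domination $q(\z) \le u_t(\z)$ from \cref{thm:beta} turning $\Rsafe_\epsilon$-certificates into $o_t^\epsilon$-certificates, the seed enlargement $S_0 \subseteq \Shatp_{t-1}$ from \cref{lem:NonDecresingS}, \cref{lem:RergSubset} for the ergodic intersection, and an induction over the expansion index $n$ followed by taking the limit. The only (cosmetic) difference is bookkeeping: you wrap everything in an outer induction on $t$ whose hypothesis you admit is never used, and factor the argument into ``domination at a common seed'' plus ``monotonicity of $\tilde{O}_t^\epsilon$ in its seed,'' whereas the paper argues directly for each fixed $t$ by proving $R^n_\epsilon(S_0) \subseteq O^{\epsilon,n}_t(\Shatp_{t-1})$ in a single induction that interleaves both steps.
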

\begin{proof}
    We will show this claim with a proof by induction. Let us consider $\s \in R_\epsilon(S_0)$. We know there is a $\z \in S_0$ such that $q(\z) - Ld(\s,\z) - \epsilon \geq 0$. By \cref{thm:beta}, we know this means $u_t(\z) - Ld(\s,\z) - \epsilon \geq 0$ for all $t\geq 0$. Therefore, $\Rsafe_\epsilon(S_0) \subseteq o_t^\epsilon(S_0) \subseteq o_t^\epsilon(\Shatp_{t-1})$ since $S_0\subseteq \Shatp_{t-1}$ for all $t\geq 1$ by \cref{lem:NonDecresingS}. \cref{lem:RergSubset} allows us to say that $R_\epsilon(S_0) \subseteq O^\epsilon_t(\Shatp_{t-1})$. As induction hypothesis, we can assume $R^{n-1}_\epsilon(S_0) \subseteq O^{\epsilon, n-1}_t(\Shatp_{t-1})$. Consider $\s \in R^{n}_\epsilon(S_0)$. We know there is $\z \in R^{n-1}_\epsilon(S_0) \subseteq O^{\epsilon, n-1}_t(\Shatp_{t-1})$ such that $q(\z) - Ld(\s,\z) - \epsilon \geq 0$ which, by \cref{thm:beta}, means that $u_t(\z) - Ld(\s,\z) - \epsilon \geq 0$ for all $t\geq 0$. Therefore, $\Rsafe_\epsilon(R^{n-1}_\epsilon(S_0)) \subseteq o^\epsilon_t(O^{\epsilon, n-1}_t(\Shatp_{t-1}))$. Using \cref{lem:RergSubset}, we can say $R^n_\epsilon(S_0) \subseteq O^{\epsilon, n}_t(\Shatp_{t-1})$, which completes the induction step and concludes the proof.
\end{proof}
%
%
%


\section{Main result}\label{app:main_result}
\mainresult*
\begin{proof}
    The safety is a direct consequence of \cref{thm:Safety_of_trajectory}.
    The convergence of the pessimistic and optimistic approximation of the safe sets is a direct consequence of \cref{thm:Completeness,lem:SoptConvergence,lem:ShatoContainsRbarEpsilon}.
\end{proof}

The following corollary gives a simpler interpretation of our main result: in presence of an unknown constraint, the IML oracle augmented with \goose behaves as the IML oracle would behave if it had knowledge of a better-than-$\epsilon$-accurate approximation of the safe reachable set from the beginning (except for a finite number of constraint evaluations).
\CorMainResult*
\begin{proof}
    This is a direct consequence of \cref{thm:safety_and_complteness}, since it guarantees that, if necessary, we can expand the set of points where we can evaluate the objective (i.e. the pessimistic safe set) and we can contract the decision space of the IML oracle (i.e. the optimistic safe set) to a point where the first contains the second, in a finite number of constraint evaluations. 
\end{proof}



\end{document}